 \newcommand{\pib}{\mathcal{\pi}^{ib}}
 \newcommand{\piib}{\mathcal{\pi}^{ib}}
 \newcommand{\pibs}{\mathcal{\pi}^{bs}}
 \newcommand{\pbs}{\mathcal{\pi}^{bs}}
 \newcommand{\Tc}{T^{c}}
\newtheorem{mythm}{Theorem}
\begin{document}

\title{Adaptive Population-based Simulated Annealing for Uncertain Resource Constrained Job Scheduling}

\author{Dhananjay~Thiruvady, Su~Nguyen, Yuan Sun, Fatemeh Shiri, Nayyar Zaidi, Xiaodong Li~\IEEEmembership{Fellow,~IEEE}}

\maketitle

\begin{abstract}
Transporting ore from mines to ports is of significant interest in mining supply chains. These operations are commonly associated with growing costs and a lack of resources. Large mining companies are interested in optimally allocating their resources to reduce operational costs. This problem has been previously investigated in the literature as resource constrained job scheduling (RCJS). While a number of optimisation methods have been proposed to tackle the deterministic problem, the uncertainty associated with resource availability, an inevitable challenge in mining operations, has received less attention. RCJS with uncertainty is a hard combinatorial optimisation problem that cannot be solved efficiently with existing optimisation methods. This study proposes an adaptive population-based simulated annealing algorithm that can overcome the limitations of existing methods for RCJS with uncertainty including the pre-mature convergence, the excessive number of hyper-parameters, and the inefficiency in coping with different uncertainty levels. This new algorithm is designed to effectively balance exploration and exploitation, by using a population, modifying the cooling schedule in the Metropolis-Hastings algorithm, and using an adaptive mechanism to select perturbation operators. The results show that the proposed algorithm outperforms existing methods  across a wide range of benchmark RCJS instances and uncertainty levels. Moreover, new best known solutions are discovered for all but one problem instance across all uncertainty levels.

\end{abstract}

\begin{IEEEkeywords}
Resource Constrained Job Scheduling; Uncertainty; Simulated Annealing; Adaptive Perturbation
\end{IEEEkeywords}

\IEEEpeerreviewmaketitle

\section{Introduction}

The mining supply chain in Australia is one of the largest in the world, where a significant proportion of the minerals mined are exported overseas. Mining is major contributor to the Australian economy, with the exporting minerals contributing around $\$$AUD 200 billion per annum to the economy, employing 240,000 individuals directly, and around 850,000 individuals  indirectly \cite{mining}. Due to its role in the export of raw materials, Australia has developed a reputation in building mining technologies and is one of the largest suppliers of these technologies and software, worldwide.\footnote{\url{https://www.austrade.gov.au/ArticleDocuments/2814/Mining-software-and-specialised-technologies-Industry-capability-report.pdf.aspx}}

There are numerous problems within the mining supply chain. Of significant interest, is the transportation of ore from mines to ports. In particular, transporting ore from multiple mining sites to a port where resources (trains and trucks) must be shared, is a key problem. If this problem is solved effectively, it can lead to large increases in throughput, thereby leading to substantial gains in revenue for the mining companies. In an abstract form, this problem can be formulated as a resource constrained job scheduling (RCJS) problem. The transportation of ore in batches can be viewed as jobs, while different transport modes (trains or trucks), can be viewed as resources. In the real setting, certain batches are required to be loaded on ships before others, and this aspect can be modelled by precedences between jobs. The timely arrival of ore at the ports is very important, as batches of ore that unnecessarily wait will lead to demurrage costs. This aspect is incorporated as a weighted tardiness for each job (different batches of ore being delayed lead to different costs), and the aim is to ensure the total weighted tardiness across all jobs is kept at minimum. 

In the literature, this problem is known as RCJS with a shared central resource \cite{singhweiskircher10,singhweiskircher11,singhernst10}. The complexity of RCJS makes it challenging for optimisation algorithms, and hence, several approaches have been proposed to tackle it. These include metaheuristics and matheuristics \cite{singhernst10, ernstsingh12, Thiruvady2012, Thiruvady2014CG, Thiruvady2016,  Cohen2017, nguyenacgp2018, nguyen_cor_2019, nguyen2020gprcjs, Thiruvady2020}. The problem was introduced by \cite{singhernst10}, who proposed a Lagrangian relaxation based heuristic, which was able to find good solutions and lower bounds. Singh and Weiskircher \cite{singhweiskircher10}, \cite{singhweiskircher11} consider a similar problem to RCJS that is modelled as machine scheduling with fractional shared resources. They propose collaborative approaches via agent-based modelling, which prove to be effective in tackling these problems. Ernst and Singh \cite{ernstsingh12} propose a Lagrangian relaxation based matheuristic that employs particle swarm optimisation, which showed improvements in solution quality across the RCJS benchmark dataset. Thiruvady et al. \cite{Thiruvady2012} consider a variant of RCJS with hard due dates, and they show that an ant colony optimisation (ACO) and constraint programming hybrid can efficiently find feasible and high quality solutions. The study by Thiruvady et al. \cite{Thiruvady2014CG} proposes a column generation approach for RCJS, which finds excellent solutions and lower bounds. An inherent issue is the time requirements in solving this problem, and hence, parallel implementations of metaheuristics and matheuristics have been attempted \cite{Thiruvady2016, Cohen2017}. The best known solutions for RCJS have been obtained by Nguyen et al. \cite{nguyen_cor_2019} and Blum et al. \cite{blum2019}. Nguyen et al. \cite{nguyen_cor_2019} develops a hybrid of differential evolution and column generation, and they show excellent results on small and medium sized problems. Blum et al. \cite{blum2019} devise a biased random key genetic algorithm, which produces excellent results, especially for large problem instances. In the interest of finding good solutions  quickly, studies have shown  that genetic programming can be effective for RCJS \cite{nguyenacgp2018,nguyen2020gprcjs,Nguyen2021}. Finally, Thiruvady et al. \cite{Thiruvady2020} propose Merge search and CMSA for RCJS. 

RCJS under uncertainty (RCJSU) is a variant of RCJS that takes into consideration different practical random sources such as job release times, job processing times or the availability of resources. Of vital importance in RCJS, and crucial in real-world settings, is the uncertain resource availabilities. That is, if trains and trucks breakdown or tracks and roads need maintenance leading to delays in the ore reaching ports, there can be substantial losses incurred. Therefore, planners need a solution method that can address disruptions due to random events to ensure that the solutions obtained can ensure smooth operations either by incorporating uncertainty into optimisation problems, reoptimisation, or reactive dispatching rules. Thiruvady et al. \cite{thiruvady2022} developed the first RCJSU formulation, and proposed a surrogate-assisted ant colony system (SACS) to find robust solutions. The results show that SACS outperforms solution methods based on mixed integer programming (MIP), meta-heuristics and surrogate-assisted evolutionary algorithms. Further analyses show that SACS finds good solutions more quickly compared to standard ACO algorithms.

Although SACS has shown promising results, there are some key limitations. Firstly, surrogate models help SACS address the challenge of expensive solution evaluations, but the proposed algorithm tends to converge prematurely, especially for large instances. Therefore, SACS (as well as other existing algorithms in the study of Thiruvady et al. \cite{thiruvady2022}) can waste a lot of time exploring the search space without making significant progress. Secondly, the proposed method has many pre-defined hyper-parameters such as the learning rate of ant colony systems (ACS) and pre-selection rate for the surrogate models. Because of this issue, the algorithm can have trouble in solving a wide range of instances without major modifications (e.g. reoptimise hyper-parameters). Finally, the performance of SACS is not consistent across different uncertainty levels and different problem sizes (i.e., number of machines). For medium problem sizes and large uncertainty levels, SACS does not show significantly superior performance compared to standard ACS.

This paper addresses the three key limitations discussed above by developing a new adaptive population-based simulated annealing (APSA) algorithm for RCJSU. The main contributions of this paper are summarised as follows: (1) a new population-based simulated annealing algorithm with a balance between exploration and exploitation, (2) a new efficient Metropolis-Hastings algorithm extended from the simulated annealing algorithm proposed in  \cite{singhernst10}, and (3) a new adaptive mechanism to update the probabilities for selecting perturbation operators to avoid premature convergence. APSA is similar to memetic algorithms, and population-based stochastic local search \cite{ong_memetic_2010,Hoos2015} that combines evolutionary algorithms and local search heuristics; however, APSA adopts a much smaller population in which the population is improved mainly via local search and adaptive perturbations rather than crossover operators. The balance between exploration and exploitation within APSA allows the algorithms to perform efficiently against noisy and complex problems like RCJSU. Extensive experiments are conducted to validate the superior performance of the proposed algorithms for RCJSU and RCJS.

The paper is organised as follows. Section~\ref{rel_work} provides a review of related work. In Section~\ref{sec:problem}, a formulation of the RCJSU problem is provided, including an integer programming model.  Section~\ref{sec:methods} discusses the adaptive population-based simulated annealing algorithm proposed in this study. The experimental evaluation and associated results are presented in Section~\ref{sec:results}. Section~\ref{sec:conclusion} concludes the paper and presents possibilities for future work.  

\section{Related Work} \label{rel_work}
A review of the literature related to RCJS is provided here, including methods and techniques that are effective at tackling problems similar to RCJS and RCJSU. 

\subsection{Optimisation techniques for tackling RCJS}

Owing to the complexity of RCJS, mathematical programming methods on their own do not suffice in finding optimal or even feasible solutions to relatively small problems. Hence, metahuristics and matheuristics have been popular approaches for tackling the problem. Metaheuristics, including partical swarm optimisation \cite{ernstsingh12}, ACO \cite{Thiruvady2016}, differential evolution \cite{nguyen_cor_2019}, and genetic algorithms \cite{blum2019} have been known to find good solutions, albeit with large time overheads. RCJS can be modelled with graphs, and hence, ACO lends itself well to the problem. Nonetheless, past studies on RCJS have shown that ACO tends to converge slowly and gets trapped in local optima. Moreover, diversification strategies including local search (e.g., parallel implementations or niching techniques) help ACO, but the challenges still remain. Hybrid methods, especially those that can guarantee optimality, have also been explored \cite{singhernst10, Thiruvady2014CG, nguyen_cor_2019}, but the results show that only relatively small RCJS problem instances (up to 30 jobs) can be solved with these methods.

\subsection{Scheduling on problems with uncertainty}\label{sec:klit_review}

Uncertainty is inherent in many real-world applications, and dealing with uncertainty is often challenging when devising scheduling algorithms. If uncertainty is not considered, the ensuing scheduling algorithms in the deterministic setting are often not effective at even finding feasible solutions  (e.g. if a machine breaks down, a job will not be able to start at its pre-determined start time). Hence, a good understanding of the setting in which the schedules are applied is important to ensure success of the scheduling algorithms. Often, reactive scheduling strategies can be more effective in dealing with frequently occurring situations with uncertainties, since the latest information can be incorporated into the decisions. Nonetheless, if predicting the schedules is needed and knowledge of uncertainty is available, the preference is to develop robust schedules. The focus of this paper is on finding robust schedules for RCJSU, since predictable schedules are necessary for mining operations to coordinate supporting activities and to ensure that ore arrives at ports in a timely manner. Of particular interest is the timely arrival of ore at the ports, which relies greatly on the availability of resources. This is indeed the focus in the RCJSU problem and presented in the formulation in Equations \eqref{obj} to \eqref{m2c6} (see Section~\ref{sec:problem}).

When tackling scheduling problems with uncertainty, the aim is typically to devise methods that are capable of identifying good solutions under any uncertain scenario. These solutions are called robust solutions and the research area, robust proactive scheduling under uncertainty. There are numerous studies focusing on this area, which we summarise here. Beck and Wilson \cite{Beck11} investigate job shop scheduling, focusing on a variant which consists of stochastic processing times. Their results demonstrate that hybrids of tabu search and constraint programming with Monte Carlo simulation are effective in handling large-sized problems with uncertainty. The study by Song et al. \cite{Song19}, proposes branch-and-bound algorithms that generate robust schedules on a variant of resource-constrained project scheduling with processing times that are uncertain depending on time. Identifying robust solutions in uncertain problems introduces additional complexity, and for this purpose metaheuristcs such as swarm intelligence algorithms and evolutionary approaches are known to be effective. Bierwirth and Mattfield \cite{Bierwirth99} study genetic algorithms for dynamic job arrivals in classic job shop scheduling. Their results show that their approaches are superior to priority dispatching rules, which are popular in dynamic production scheduling. Wang et al. \cite{Wang201570} devise a surrogate-assisted multi-objective evolutionary method to tackle proactive scheduling where machines are considered to have stochastic breakdowns. To cope with the uncertainty, they propose a job compression and rescheduling strategy \cite{Gao2019}. Support vector regression in conjunction with surrogate models have been proposed to enhance the speed of time-intensive simulations. The study by Wang et al. \cite{wang20} provides an in-depth analysis and discussion of scheduling with machine breakdowns. Systematic reviews of scheduling in problems with uncertainty have also been conducted \cite{chaari2014scheduling,Ouelhadj2008}.

The least two decades have seen numerous studies that consider uncertainty in scheduling problems similar to that of RCJS. There are several similarities to project scheduling, and many studies have considered uncertainty within this setting \cite{herroelen2005project,demeulemeester2011robust,lambrechts2011time,masmoudi2013project,Khodakarmi2007,moradi2019robust,CHAKRABORTTY2017537}. In these studies, uncertainty is typically considered in respect to the resources, though, a few consider processing time related uncertainty. Lambrechts et al. \cite{lambrechts2011time} study uncertainty with resources, and specifically modes of execution related to the activities. They develop the notion of ``time buffering,'' that efficiently generates robust schedules. Masmoudi and Ha\"{i}t \cite{masmoudi2013project} consider resource levelling and resource constrained project scheduling, and they propose fuzzy modelling to tackle uncertainty. Their solution approaches are a genetic algorithm and a greedy method to identify robust schedules. Khodakarami et al.  \cite{Khodakarmi2007} investigate Bayesian networks to identify causal structures considering uncertainty in sources and parameters of a project. They find that Bayesian networks are indeed very effective at dealing with uncertainty, including aspects such as representation, elicitation and management. Uncertainty associated with processing times has also been investigated \cite{moradi2019robust,CHAKRABORTTY2017537}. Moradi and Shadrokh \cite{moradi2019robust} develop a problem-specific heuristic to deal with uncertainty, and the study by Chakrabortty et al. \cite{CHAKRABORTTY2017537} propose several Branch \& Cut heuristics for project scheduling.\footnote{Benchmark datasets obtained from the project scheduling problem library (PSPLIB) \cite{KOLISCH1997205}.} 

\subsection{Hybrid optimisation methods} 
Local search and population-based methods \cite{Luke2013Metaheuristics} have been extensively investigated in the literature to tackle challenging scheduling problems. While local search heuristics are best known for their ability to find local optima efficiently, population-based optimisation methods such as genetic algorithms are better known for their exploration ability. Empirical studies have demonstrated the need to maintain a balance between exploration and exploitation when dealing with difficult combinatorial optimisation problems such as job shop scheduling \cite{CHENG1999343}. Such hybrid methods are commonly referred to as memetic algorithms \cite{ong_memetic_2010} or population-based stochastic local search algorithms \cite{Hoos2015} in the literature. While hybrid algorithms are effective in  combinatorial optimisation, there are still a few shortcomings. Firstly, design choices to develop  competitive hybrid algorithms are difficult to make and can be sensitive to different problem subsets. Secondly, the performance of hybrid algorithms depend strongly on an ``ideal'' setting of the algorithm parameters.

Hybrid algorithms have also been developed for RCJS. The study that introduced the problem \cite{singhernst10} develop a Lagrangian relaxation heuristic, which combines MIP and simulated annealing. Following this, Thiruvady et al. \cite{Thiruvady2014CG} show that a column generation and ACO hybrid is very effective on this problem by finding good lower bounds. Parallel implementations of ACO \cite{Thiruvady2016} have also been effective, by finding high-quality solutions quickly. \cite{nguyen_cor_2019} propose a hybrid algorithm combining differential evolution and local search with excellent results. The study by Thiruvady et al. \cite{Thiruvady2020} show that parallel matheuristics based on MIP and ACO can effectively tackle this problem.

Thiruvady et al. \cite{thiruvady2022} has developed SACS, a sophisticated algorithm combining ACO, surrogate modelling, and local search to deal with RCJSU. The experiments show that SACS can find excellent solutions for small problems thereby outperforming other meta-heuristics. However, further analyses show that SACS tends to converge prematurely and is inconsistent when working with different problem subsets. Moreover, SACS has many hyper-parameters (for ACO, local search, surrogate models), for which the ideal settings are difficult to find.

\section{Problem Formulation}
\label{sec:problem}

The formal definition of RCJSU as an integer program (IP) was provided by Thiruvady et al. \cite{thiruvady2022}. Past studies have considered uncertainty in different ways, including resource limits \cite{lambrechts2011time} or uncertainty in job/task processing times \cite{CHAKRABORTTY2017537}. However, in this study the focus is on uncertainty derived from the variability in resources, which in turn affects the processing times of jobs/tasks.\footnote{We refer the reader to the study by Thiruvady at al. \cite{thiruvady2022} for a detailed discussion of uncertainty in respect to this problem}. The problem is NP-hard as it is at least as hard as its deterministic counterpart, which is indeed shown to be NP-hard \cite{singhernst10}. This IP formulation extends the one of Singh and Ernst \cite{singhernst10}, proposed for RCJS. Uncertainty is introduced by considering several uncertain resource levels, that is, some proportion of the original resource limit is used for each uncertainty sample (details in Section~\ref{sec:results}). 

We are given several jobs $\mathcal{J}$ $= \{1,\ldots,n\}$, which are associated with the following information. Each job $i$ must execute on machine $m_i$, ($\mathcal{M}$ $= \{1,\ldots,l\}$), has a a release time $r_i$, processing time $p_i$, due time $d_i$, weight $w_i$ and resource usage $g_i$. Jobs on the same machine cannot execute concurrently, or in other words, only one job may execute at one time. A time horizon is given, consisting of discrete points ${\mathcal{T}}=\{0,\ldots,D\}$, and $D$ is chosen to be sufficiently large to ensure a feasible solution will always be found. Precendences may exist between any two jobs $i \in \mathcal{J}$ and $j \in \mathcal{J}$ that execute on the same machine. This is represented as $i \rightarrow j$, where job $i$ completes before job $j$ starts. 

A central resource constraint exists across all machines, where the cumulative resource usage of jobs executing at any time point must be at most $\mathcal{G}$. When considering uncertainty, $\mathcal{G}$ is allowed to vary. This leads to defining a {\it sample}, which has available a proportion of the resource $\mathcal{G}$, and is an instance of the problem. As an input, the total number of samples $u$ is provided, and the resource limit is $\mathcal{G}_s$ applies to the $s^{th}$ sample ${\cal U} = \{1,\ldots,u_S\}$.

The IP can be defined as follows. Let $z_{sjt}$ be a binary variable, which indicates in sample $s$, job $j$ completes or has completed by time $t$, if $z_{sjt} = 1$. By this definition, a job stays complete once completed. Compared to RCJS, the RCJSU IP formulation requires additional space complexity, which consists of ${\mathcal O}(n^3)$ variables, and each set of constraints with an additional dimension owing to the samples. The IP formulation is: 

\begin{alignat}3
 \min \quad \frac{\sum_{s \in \cal U}\sum_{j \in \cal J}\sum_{t \in \cal T} w_{jt} \times (z_{sjt} - z_{sjt-1})}{u_s}  & \label{obj}
 \end{alignat}
\begin{alignat}3
&\text{s.t. }  \nonumber \\ 
& z_{sjt_{\max}} = 1  & \forall\ s \in {\cal U}, \forall\ j \in {\cal J}  \label{m2c1} \\
& z_{sjt} - z_{sjt-1} \geq 0 & \forall\ j \in {\cal J}, \forall\ s \in {\cal U}, \nonumber \\ 
&& t \in \{1,\ldots,t_{\max}\} \label{m2c2} \\
& z_{sjt} = 0  & \forall\ t \in \{1,\ldots,e_j\}, \nonumber \\
&& \forall\ s \in {\cal U},\ \forall\ j \in {\cal J}  \label{m2c3} \\
& z_{sbt} - z_{sa,t-p_b} \leq 0  & \forall\ (a,b) \in  {\cal C},\ \forall\ s \in {\cal U}, \nonumber \\ 
&& \forall\ t\in {\cal T}  \label{m2c4} \\
& \sum_{j \in {\cal J}^i} z_{sj,t+p_j} - z_{sjt} \leq 1 &  \forall\ i \in {\cal M},\ \forall\ s \in {\cal U}, \nonumber \\ 
&& \forall\ t\in {\cal T} \label{m2c5} \\
& \sum_{j \in {\cal J} } g_j \cdot (z_{sj,t+p_j}-z_{sjt}) \leq {\cal G}_s & \forall\  s \in {\cal U},\ \forall\ t \in  {\cal T}  \label{m2c6}
\end{alignat}
Equation~\eqref{obj}, the objective function, states that the average total weighted tardiness (TWT) across all samples must be minimised. Constraints~\eqref{m2c1} ensure that all jobs complete. Once a job completes it stays complete, which is enforced by Constraints~\eqref{m2c2}. Constraints~\eqref{m2c3} ensure that all jobs satisfy release times and the precedences are imposed by Constraints~\eqref{m2c4}. In this constraint, $e_j = r_j+p_j-1$. Constraints~\eqref{m2c5} require that only one job can execute on one machine at every time point. Constraints~\eqref{m2c6} ensure that the resource limits, considering each sample, are satisfied.

For the RCJS, there are efficient scheduling heuristics that are known (see Section~\ref{sec:heur}). These heuristics typically depend on an alternative formulation, where a sequence or permutation jobs are efficiently mapped to schedules. Previous studies have used this approach for RCJS (e.g. \cite{Thiruvady2014CG}), and it has also proved beneficial for the RCJSU \cite{thiruvady2022}. In this study, $\pi$ represents a sequence of jobs while $\sigma(\pi)$ represents an efficient mapping from $\pi$ to a feasible schedule. Therefore, $\pi$ is effectively a solution to the problem, where given a sequence, a deterministic schedule can be efficiently constructed.

An example RCJS problem instance is shown in Table~\ref{tab:example}. This example shows how a solution is constructed and what the optimal solution is. The problem consists of three jobs, with the associated information in the tables. Moreover, the resource capacity is 10 units. Considering $\pi$, there are six potential sequences. Among the jobs, the resulting schedules from $\pi$ will be efficient if Job 3 is scheduled early owing to an early due date and  high weight. Additionally, Job 2 should be sequenced before Job 1 due to its higher weight. Given this information, the optimal solution is $\pi = \{3,2,1\}$. 

\begin{table}[ht]
\centering
\caption{An example with 3 jobs and 10 units of resource. Proc. is the processing time.}
\label{tab:example}
 \scalebox{0.8}{
\begin{tabular}{llrrrrrrr}
\toprule			

 Job && Resource && Weight && Proc. && Due  \\
\midrule
    1 && 5 && 0.1 && 1 && 2 \\ 
     2 && 10 && 0.2 && 1 && 2 \\ 
      3 && 10 && 0.5 && 1 && 1 \\ 
\bottomrule
\end{tabular}
}
\end{table}

\section{Population-based Simulated Annealing}\label{sec:methods}

Several metaheuristics have been able to find excellent solutions to RCJS in a time efficient manner. One of these approaches was simulated annealing ~\cite{singhernst10}. However, to tackle RCJSU, the traditional simulated annealing method has a few shortcomings, and hence, we propose an adaptive  population-based simulated annealing (APSA). In particular, this method has three main differences from the traditional approach: (1) a population of solutions to track and exploit best known solutions, (2) a faster cooling schedule within the Metropolis-Hastings algorithm, and (3) adaptive probabilistic selection of the neighbourhood moves to perturb solutions in the population. Comparisons are made to ACS (ant colony system) and SACS (surrogate-assisted ant colony system) of \cite{thiruvady2022}, which have previously been effective on RCJSU.  

\subsection{The APSA Algorithm}\label{sec:alg}

Algorithm~\ref{alg:psa} shows the high-level APSA procedure for RCJSU. As input, the algorithm takes the following: (a) $T_0$: an initial temperature, (b) $t_{lim}$: time limit for the execution of the algorithm, (c) $Iter$: number of Metropolis-Hastings iterations, (d) $s$:  population size, (e) $\gamma$: a parameter to control the cooling schedule and (f) $\rho$: a parameter that specifies how to adapt the probabilities ($\rho =$ 0.9 for this study). In Line~2, the population $\Pi$ is initialised, where each solution consists of a random list of all the tasks. The best known solution, $\pibs$, is initialised to be empty. In Line~3, the probability distributions associated with three neighbourhood moves (discussed in the following) are initialised. The main procedure takes place between Lines~4--21, and the algorithm completes execution when the time limit is reached. 

\begin{algorithm}[t]
\caption{Adaptive Population-based Simulated Annealing (APSA)}
\label{alg:psa}
\begin{algorithmic}[1]
  \State {\bf Input:} An MMALBPS instance, $T_0$, $t_{lim}$, $Iter$, $s$, $\gamma$, $\rho$
  \State {\sf $\Pi \leftarrow$ Initialise}($s$), {\sf $\pbs \leftarrow \emptyset$}
  \State $p^b \leftarrow 0.65$, $p^j \leftarrow 0.3$, $p^r \leftarrow 0.05$
\While {execution time $< t_{lim}$}
    \For{$\pi \in \Pi$}
    \State {\sf $\pi^t :=$ MetropolisHastings($\pi$, $T_0$, $Iter$, $\gamma$)}
      	\State $\pbs :=$ {\sf Update($\pi$) }
    \State $r \leftarrow$  $\text{{\sf rand}}()$
     \If { $r < p^b$} 
        \State {\sf $\beta-$Sampling($\pbs$)}
        \State $p^b \leftarrow p^b \times \rho$ 
    \ElsIf {$r <  p^b + p^j$} 
        \State $\pi := $ {\sf SwapOps($\pib$, $m$) }
        \State $p^j \leftarrow p^j \times \rho$
    \Else
        \State $\pi \leftarrow $ {\sf RandomList()}
        \State $p^r \leftarrow p^r \times \rho$
    \EndIf  	
    \State {\sf Normalise($p^b$, $p^j$, $p^r$) }
    \EndFor
\EndWhile
\State $\pi^f :=$ {\sf SwapAllJobPairs($\pbs$)}
\State return $\pi^f$
\end{algorithmic}
\end{algorithm}

Within the main loop, the following is applied to each solution in the population. First, in Line~6, the Metropolis-Hastings algorithm is applied (Algorithm~\ref{alg:mh}) using $\pi$ as the best solution and for $Iter$ iterations. The resulting solution is stored in $\pi^t$. 
$\pibs$ is updated to $\pi$, if $\pi$ is an improvement, i.e., $f(\pi) < f(\pibs)$. Following this, between Lines~9--19, $\pi$ is perturbed using a neighbourhood move (details follow in Section~\ref{sec:nm}). As part of these steps, the probabilities associated with choosing one of the perturbation moves are adapted or updated. That is, if a move is selected, its corresponding probability reduced by an adaptation factor $\rho = 0.9$.  

Every solution obtained (by applying any neighbourhood move, etc.) requires the computation of its objective value. As mentioned earlier, a solution is represented by a sequence of the tasks. For each sequence of tasks $\pi$, a feasible schedule ($\sigma(\pi)$) is obtained by constructing $u_s$ schedules corresponding to each uncertainty sample. Note, each schedule is resource feasible, ensuring that at most ${\cal G}_s$ resources are used at each time point for sample $s$. The result is that each sample has its own TWT. The objective value of $\pi$ is computed as the average of the TWTs across all samples. 

\subsection{The Metropolis-Hastings Algorithm}

\begin{algorithm}[t]
\caption{{\sf MetropolisHastings}}
\label{alg:mh}
\begin{algorithmic}[1]
  \State {\bf Input:} $\pi$, $T_0$, $Iter$, $\gamma$
    \State $C \leftarrow \infty$, $\Tc \leftarrow T_0$, $\pi^{ib} \leftarrow \pi$ 
      \For {$i = 1$ to $Iter$}
      	\State $\pi^b \leftarrow$ {\sf SwapJobs($\pi^{ib}$)}
      	\If{$f(\pi^b) < C$}
      	    \State{$C \leftarrow f(\pi^b)$}
      	    \State $\pi^{ib} \leftarrow \pi^b$
      	    \State $i \leftarrow 1$
      	\Else
      	    \State $\Delta = f(\pi^{ib}) - C$
      	    \If{$e^{-\Delta/\Tc} \leq  \text{{\sf rand}}()$} 
      	        \State $\pi^{ib} \leftarrow \pi^b$
      	    \EndIf
      	    \State $\Tc \leftarrow \Tc \times \gamma$ 
      	\EndIf   
      	\EndFor
\State return $\pi^{ib}$
\end{algorithmic}
\end{algorithm}

The Metropolis-Hastings method relies on the cooling schedule to ensure a gradual convergence to high-quality regions of the search space. However, due to the overheads introduced by uncertainty, the traditional gradual reduction in temperature will not suffice, and hence we propose a modified version of the method for RCJSU.

Algorithm~\ref{alg:mh} shows the  Metropolis-Hastings procedure. It takes as input: (a) $\pi$: current best solution, (b) $T_0$: initial temperature, (c) $Iter$: number of iterations that the algorithm will execute for and (d) $\gamma$: a parameter to determine the cooling schedule. $C$ represents the best known objective value, and is set to a large number. The temperature $T^c$ is set to the initial temperature $T_0$, and the best solution $\piib$ is initialised to $\pi$. The algorithm executes for $Iter$ iterations between Lines~3--16. Using $\piib$, the neighbourhood move of swapping a pair of jobs is applied. That is, two jobs within the sequence are selected at random and swapped (see Figure~\ref{fig:job_swap}). If the new solution found $\pi^b$ is an improvement, $C$ is updated to the objective value of $\pi^b$. Moreover, $\piib$ is updated to $\pi^b$ and the iteration count is reset. Otherwise, a move to the non-improving solution $\pi^b$ is still chosen with probability $e^{-(f(\pi^{ib}) - C)/\Tc}$. The cooling schedule takes place in Line~14.

\subsection{Adaptive Perturbation}\label{sec:nm}

Key to the success of APSA is perturbing the best known solution, via three neighbourhood moves. In particular, when applied carefully at different stages of the search, the overall APSA algorithm is able to achieve excellent trade-offs between intensification and diversification. 

The first neighbourhood move is $\beta$-sampling, which was first studied in the context of project scheduling \cite{valls03}. The basic idea underlying the method is that consecutive jobs or tasks with a sequence are inter-dependent, and moving whole sub-sequences (maintaining certain dependencies) can lead to significant improvements in solutions. For the purposes of the current study, starting at a randomly selected position in the sequence, a sub-sequence of fixed length (5 jobs) is moved to the back of the permutation.

Figure~\ref{fig:beta_samp} provides an illustration of this method. $\pi$ consists of 11 jobs. A position $p \in {1,\ldots,n-5}$ is chosen randomly, and this sub-sequence ($p$ to $p+5$) moves to the end (bottom part of the figure). Here, $p = 4$, which starts at Job 7, and Jobs 7, 9, 10, 3 and 11 are chosen to move to the end of the sequence. Subsequent jobs (Jobs 8,4, and 5) move up in the sequence to position 4. $\beta$-sampling is applied in the above manner in Line~10 of Algorithm~\ref{alg:psa}. 

\begin{figure}[t]
\centering
\includegraphics[clip,width=0.4\textwidth]{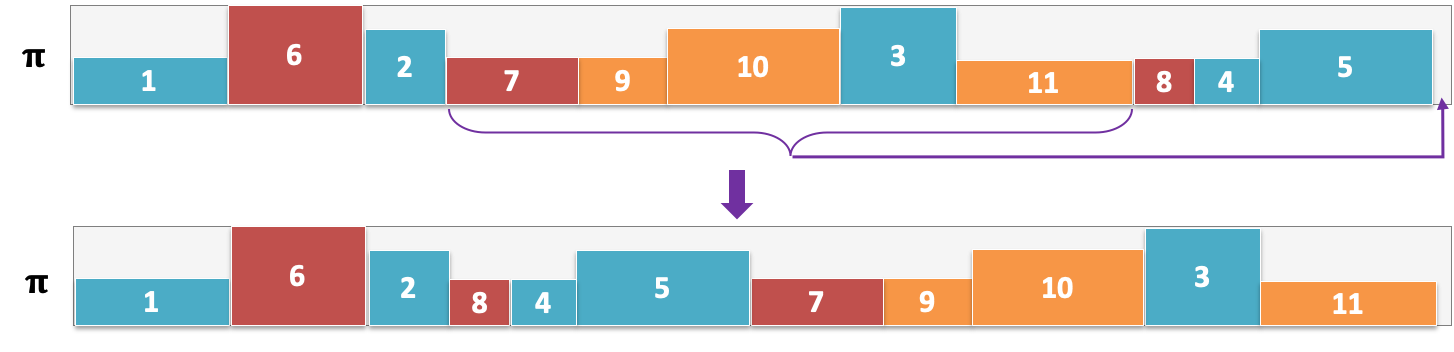}
\caption{An example that demonstrates $\beta$-sampling. From the sequence $\pi$, a subset of jobs is selected (7, 9, 10, 3, 11) and moved to the end of the sequence.} 
\label{fig:beta_samp}
\end{figure}

The second neighbourhood move is Job swapping. This consists of swapping pairs of jobs in $\pi$. Figure~\ref{fig:job_swap} shows an example of swapping one pair of jobs. Two positions or indices are chosen at random in the sequence $\pi$, and the jobs in these positions are swapped. In the example, Jobs 2 and 8 are swapped. 

\begin{figure}[t]
\centering
\includegraphics[clip,width=0.4\textwidth]{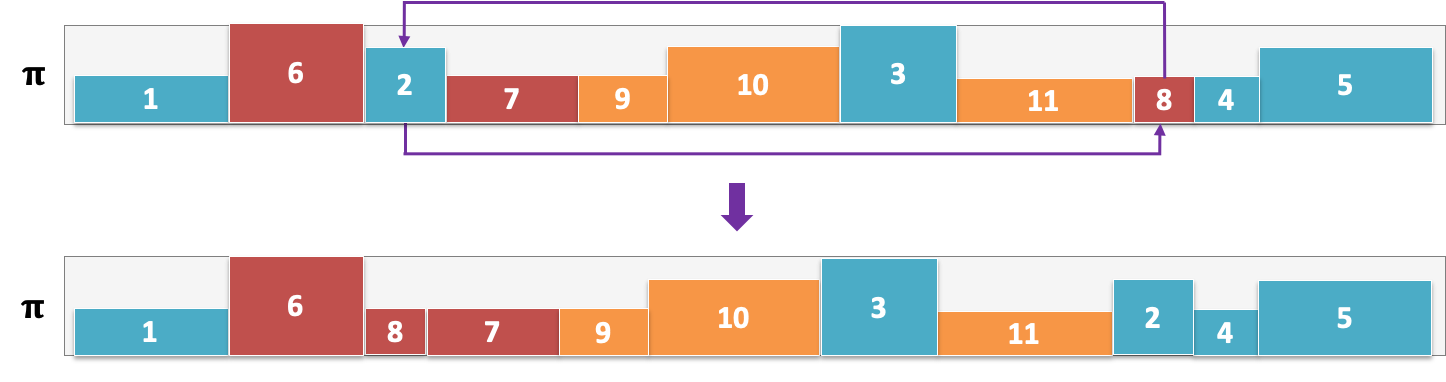}
\caption{An example that demonstrates job swapping. Two indices are selected randomly, and the corresponding jobs are swapped. In this example, indices 3 and 9 are selected, and the jobs within those indices, 2 and 9, are swapped.} 
\label{fig:job_swap}
\end{figure}

The third neighbourhood move is to re-initialise the current solution to a randomly chosen sequence. Not surprisingly, the resulting solution can belong to vastly different areas of the search space with a very different objective function value. Hence, this neighbourhood move is expected to produce the largest amount of diversification. 

\subsubsection{Convergence of Probabilities} \label{sec:prob_conv}

Assuming the probability values of $p_b$, $p_j$ and $p_r$ at time step $t$ are $p_b^t$, $p_j^t$ and $p_r^t$, the expected probabilities at time step $t+1$ can be calculated as
\begin{align}
    p_b^{t+1} & = p_b^t \Bigg(\frac{0.9p_b^t}{1-0.1p_b^t}\Bigg) + p_j^t \Bigg(\frac{p_b^t}{1-0.1p_j^t}\Bigg) + p_r^t \Bigg(\frac{p_b^t}{1-0.1p_r^t}\Bigg), \\ 
    p_j^{t+1} & = p_j^t \Bigg(\frac{0.9p_j^t}{1-0.1p_j^t}\Bigg) + p_r^t \Bigg(\frac{p_j^t}{1-0.1p_r^t}\Bigg) + p_b^t \Bigg(\frac{p_j^t}{1-0.1p_b^t}\Bigg), \\
    p_r^{t+1} & = p_r^t \Bigg(\frac{0.9p_r^t}{1-0.1p_r^t}\Bigg) + p_b^t \Bigg(\frac{p_r^t}{1-0.1p_b^t}\Bigg) + p_j^t \Bigg(\frac{p_r^t}{1-0.1p_j^t}\Bigg). 
\end{align}
Hence the governing differential equations for evolving the probabilities $p_b$, $p_j$ and $p_r$ can be derived:
\begin{align}
    \frac{dp_b}{dt} & = 0.1p_b \Bigg( \frac{p_j^2}{1-0.1p_j} + \frac{p_r^2}{1-0.1p_r} - \frac{p_b(p_j+p_r)}{1-0.1p_b}\Bigg), \label{eq:pb}\\ 
    \frac{dp_j}{dt} & = 0.1p_j \Bigg( \frac{p_r^2}{1-0.1p_r} + \frac{p_b^2}{1-0.1p_b} - \frac{p_j(p_b+p_r)}{1-0.1p_j}\Bigg), \label{eq:pj}\\
    \frac{dp_r}{dt} & = 0.1p_r \Bigg( \frac{p_b^2}{1-0.1p_b} + \frac{p_j^2}{1-0.1p_j} - \frac{p_r(p_b+p_j)}{1-0.1p_r}\Bigg). \label{eq:pr}
\end{align}

\begin{mythm}
For any initial condition of $p_b^0$, $p_j^0$, $p_r^0 \in [0,1]$ and $p_b^0 + p_j^0 + p_r^0 = 1$, the probability values converge to $p_b = p_j = p_r = 1/3$.
\end{mythm}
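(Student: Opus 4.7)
The plan is to treat the system \eqref{eq:pb}--\eqref{eq:pr} as a continuous-time flow on the probability simplex and prove convergence via a Lyapunov function combined with LaSalle's invariance principle. The first step is to rewrite each right-hand side in a compact factored form. Setting $h(p) := p/(1-0.1p)$, a direct check shows that $p_y^2/(1-0.1p_y) - p_x p_y/(1-0.1p_x) = p_y\bigl(h(p_y) - h(p_x)\bigr)$, so the three ODEs collapse into the single pattern
\begin{equation}
\frac{dp_x}{dt} = 0.1\, p_x \sum_{y \neq x} p_y\bigl(h(p_y) - h(p_x)\bigr), \quad x \in \{b, j, r\}.
\end{equation}
The key structural fact is that $h$ is strictly increasing on $[0,1]$, since $h'(p) = 1/(1-0.1p)^2 > 0$, so each interaction term carries the sign of $p_y - p_x$: above-average coordinates are pulled down and below-average ones are pulled up.

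Next, I would establish two invariance properties. Summing the three factored equations, every contribution $p_x p_y\bigl(h(p_y) - h(p_x)\bigr)$ cancels against its transpose $p_y p_x\bigl(h(p_x) - h(p_y)\bigr)$, giving $\dot p_b + \dot p_j + \dot p_r \equiv 0$; hence the affine constraint $p_b + p_j + p_r = 1$ is preserved. Moreover, $\dot p_x$ carries an explicit factor of $p_x$, so the faces $\{p_x = 0\}$ are invariant and the open simplex is forward invariant. Since Algorithm~\ref{alg:psa} initialises all three probabilities strictly positive, it suffices to analyse trajectories in the interior.

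The heart of the proof is the Lyapunov candidate
\begin{equation}
V(p_b, p_j, p_r) = p_b^2 + p_j^2 + p_r^2,
\end{equation}
which on the simplex satisfies $V \geq 1/3$ with equality iff $p_b = p_j = p_r = 1/3$. Differentiating along the dynamics gives $\dot V = 0.2 \sum_{x \neq y} p_x^2 p_y\bigl(h(p_y) - h(p_x)\bigr)$, and pairing the $(x,y)$ and $(y,x)$ contributions rewrites this as a sum over unordered pairs
\begin{equation}
\dot V = -0.2 \sum_{\{x,y\}} p_x p_y (p_x - p_y)\bigl(h(p_x) - h(p_y)\bigr).
\end{equation}
Since $h$ is strictly increasing, each summand is non-negative, so $\dot V \le 0$, and on the interior simplex $\dot V = 0$ forces $p_x = p_y$ for every pair, leaving only the barycentre.

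I would then conclude by invoking LaSalle's invariance principle on the compact simplex: the $\omega$-limit set of any interior trajectory lies in the largest invariant subset of $\{\dot V = 0\}$, which here is the singleton $\{(1/3,1/3,1/3)\}$. Combined with boundedness of trajectories, this yields the claimed convergence. The main obstacle is selecting the right Lyapunov functional and spotting the antisymmetric pairing that ties the decay of $V$ to the monotonicity of $h$; once $h$ is identified as the increasing map that makes the flow ``gradient-like'', the remaining algebra is routine. A minor caveat is that a coordinate initialised at $0$ stays at $0$, so the theorem's conclusion should be read under the (operative) assumption of strictly positive initial probabilities; on the boundary the dynamics reduce to a lower-dimensional system with a different fixed point.
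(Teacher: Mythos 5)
Your argument is correct in substance but follows a genuinely different route from the paper. The paper's proof simply enumerates the seven equilibria of \eqref{eq:pb}--\eqref{eq:pr} on the simplex and asserts that the six boundary equilibria are unstable while $(1/3,1/3,1/3)$ is the only stable one, concluding convergence from that classification; no Lyapunov function or global argument appears. You instead factor the vector field as $\dot p_x = 0.1\,p_x\sum_{y\neq x}p_y\bigl(h(p_y)-h(p_x)\bigr)$ with $h(p)=p/(1-0.1p)$ strictly increasing, exhibit $V=\sum_x p_x^2$ as a Lyapunov function with $\dot V=-0.2\sum_{\{x,y\}}p_xp_y(p_x-p_y)\bigl(h(p_x)-h(p_y)\bigr)\le 0$, and invoke LaSalle. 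I checked the factorisation, the conservation of $p_b+p_j+p_r$, and the antisymmetric pairing; all are correct. What your approach buys is an actual proof of \emph{global} convergence from interior initial data, which the paper's local stability statement does not by itself deliver (instability of the other equilibria does not on its own exclude non-convergent behaviour); it also explains structurally why the six boundary equilibria arise, namely as the zeros of $\dot V$ on the faces.

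One step needs tightening. LaSalle applied on the \emph{closed} simplex only confines the $\omega$-limit set to the largest invariant subset of $\{\dot V=0\}$, and that set contains all seven equilibria, not just the barycentre: an interior orbit could a priori accumulate on a vertex or an edge midpoint, since $\omega$-limit sets of interior trajectories may lie on the boundary. Your claim that the relevant invariant set ``is the singleton'' therefore skips the step of bounding the orbit away from the faces. The fix is short: whenever $p_x$ attains the minimum of the three coordinates, $h(p_x)\le h(p_y)$ for all $y$ and hence $\dot p_x\ge 0$, so $t\mapsto\min_x p_x(t)$ is non-decreasing. An interior initial condition thus remains in a compact subset of the open simplex, its $\omega$-limit set avoids the boundary, and the only candidate equilibrium left is $(1/3,1/3,1/3)$. (Comparing Lyapunov levels, $V=1$ at vertices, $V=1/2$ at edge midpoints and $V=1/3$ at the barycentre, only rules out the boundary equilibria when $V(p^0)<1/2$, so the minimum-coordinate argument is the cleaner patch.) Your closing caveat about initial data on the boundary is correct and consistent with the theorem as used, since Algorithm~\ref{alg:psa} initialises the probabilities at $(0.65,0.3,0.05)$, all strictly positive.
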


\begin{proof}
At equilibrium of the differential equations \eqref{eq:pb}-\eqref{eq:pr}, $\frac{dp_b}{dt} = \frac{dp_j}{dt} = \frac{dp_r}{dt} = 0$. Solving this system of equations with $p_b + p_j + p_r = 1$, we can obtain the following seven equilibria from the differential equations: 
\begin{itemize}
    \item ($p_b=0, p_j=0, p_r=1$),
    \item ($p_b=0, p_j=1, p_r=0$),
    \item ($p_b=1, p_j=0, p_r=0$),
    \item ($p_b=0, p_j=1/2, p_r=1/2$),
    \item ($p_b=1/2, p_j=0, p_r=1/2$),
    \item ($p_b=1/2, p_j=1/2, p_r=0$),
    \item ($p_b=1/3, p_j=1/3, p_r=1/3$).
\end{itemize}
However, the first six equilibria are unstable, in the sense that a small disturbance will perturb the system away from those equilibria. The last equilibrium ($p_b=1/3, p_j=1/3, p_r=1/3$) is the only stable equilibrium. Hence, after a sufficient number of time steps, the probability values will converge to this stable equilibrium. 
\end{proof}

Figure~\ref{fig:my_label} shows a simulation of how the probabilities associated with each neighbourhood move (Algorithm~\ref{alg:psa}) evolve across 200 simulations. We see that three  probabilities converge to almost equal levels by 125 simulations. 

\begin{figure}
    \centering
    \includegraphics[clip,width=0.4\textwidth]{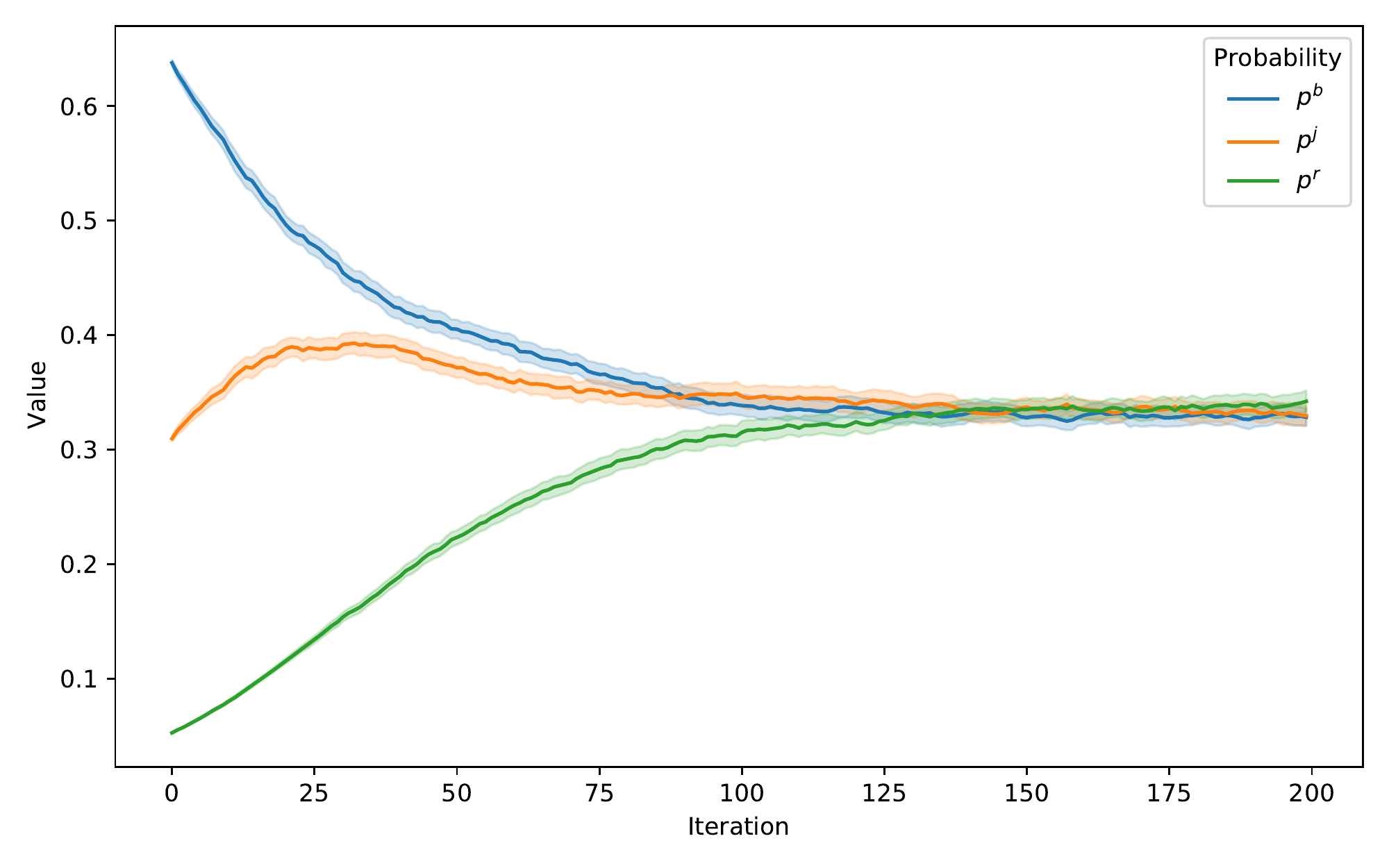}
    \caption{Adaptive probabilities over 200 simulations.}
    \label{fig:my_label}
\end{figure}

\subsection{Scheduling Heuristic}\label{sec:heur}

The aim of the scheduling heuristic is to generate a resource feasible schedule $\sigma(\pi)$ from the sequence $\pi$. For this purpose, the serial scheduling scheme is employed \cite{kolish96}, which is adapted from previous studies on RCJS \cite{singhernst10,Thiruvady2012,Thiruvady2014CG, Thiruvady2016}. Note, Thiruvady et al. \cite{thiruvady2022} showed that this scheduling scheme can indeed be effective for RCJSU.

\begin{figure}[htbp]
\centering
\subfloat[]{\label{subfig:a}\includegraphics[clip,width=0.4\textwidth]{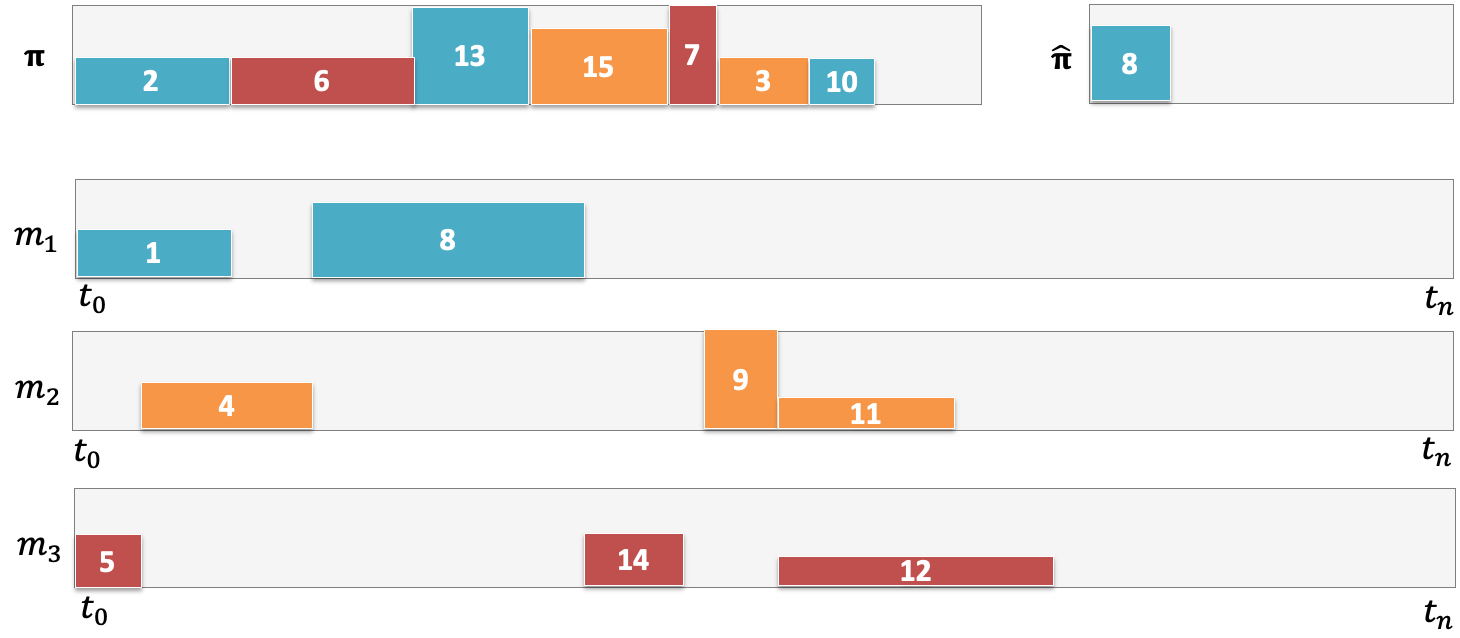}}\\
\subfloat[]{\label{subfig:b}\includegraphics[clip,width=0.4\textwidth]{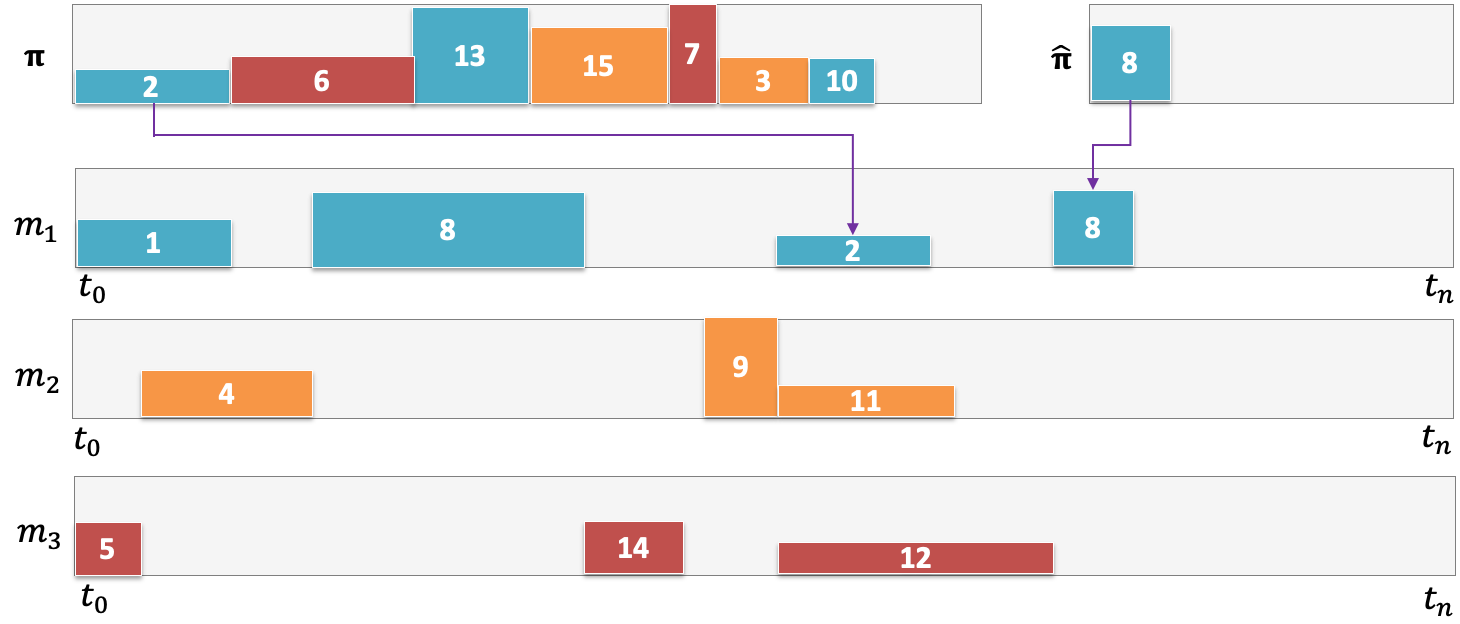}}
\vspace{-0.5em}
\caption{An example of a problem with three machines and 15 jobs. From $\pi$, jobs are selected in order (left to right) and scheduled on machines that they belong to (e.g. orange jobs on $m_2$). Those jobs that have predecessors that are not yet scheduled will be put on the waiting list $\hat{\pi}$; (a) Job 8 requires Job 2 to be scheduled before it can be scheduled. (b) Job 8 is scheduled once Job 2 has been scheduled, and this will be done as early as possible considering resources.} 
\label{fig:15job}
\end{figure}

The scheduling heuristic works as follows. Jobs in $\pi$ are selected in the order in which they appear. Each job ($j$) is tested to see if any of its preceding jobs have not yet been scheduled, and if so, $j$ is placed on the waiting list. Otherwise, $j$ is scheduled as early as possible to ensure that the resource limits are not violated. Once $j$ is scheduled, any job that depends on $j$ (and only $j$) on the waiting list is scheduled immediately. All such jobs on the waiting list are scheduled in order. Note, for RCJSU, schedules using the procedure above are created for each sample. 

Figure~\ref{fig:15job} shows how the scheduling heuristic works. The problem consists of 15 jobs to be scheduled on three machines. The height of a machine represents the amount of resources available, while a job's height represents its resource requirement. A colour implies a job's machine, i.e., jobs of one colour must go on one machine (e.g. ``orange'' jobs should be scheduled on machine $m_2$). 

Figure~\ref{subfig:a} shows a permutation $\pi$, and a solution that is partially completed with a few jobs allocated to each machine. The waiting list also consists of some waiting jobs $\hat{\pi}$, where for example, Job $8$ waits for Job $2$ to be scheduled, considering Job $2$ precedes Job $8$. Figure~\ref{subfig:b} shows one step in the method, where the next job in $\pi$, Job 2, is chosen and scheduled in the first available resource feasible time slot on $m_1$. Once Job 2 is scheduled, the waiting list can be examined, and it is found that Job 8 can be released. Job 8 is scheduled as early as possible considering the resource availability. 

\section{Experimental Setting} \label{sec:expts}

We investigate APSA on RCJSU, and compare its performance to current state-of-the-art approaches, including ACS and SACS. Additionally, we investigate APSA against standard simulated annealing (SA) with the same settings, and the SA of Singh and Ernst \cite{singhernst10}. Finally, we examine APSA's performance on the deterministic problem (RCJS), and we compare APSA to ACS, SACS and a column generation and ACO hybrid by  Thiruvady et al. \cite{Thiruvady2014CG}. The experiments are conducted on problem instances from the literature.\footnote{\url{https://github.com/andreas-ernst/Mathprog-ORlib/blob/master/data/RCJS_Instances.zip} consists of problem instances for RCJS. The resource usage within these instances is modified to generate problem instances for RCJSU.} While this data was originally created for the RCJS, each problem instance is modified to generate the data for RCJSU. Specifically, there are 10 samples generated for each problem instance, where each one has resource limits in the range $[U_{min}, U_l \times G]$. $U_{min}$ is chosen to be the largest resource required by any job, i.e. $\max_{i \in \cal J} g_i$, which ensures a resource feasible schedule can always be found. Additionally, the multiplier $U_l \in \{0.5, 0.6, 0.7, 0.8, 0.9, 1.0\}$ is used to select a  proportion of the resources. Typically, a smaller value (e.g. 0.5 or 0.6) means that the problem instance is tightly constrained, where the schedules will need larger completion times, leading to a higher overall TWTs.  

The parameter settings for APSA were selected through tuning by hand and by using past studies as a guide. We refer to Algorithms~\ref{alg:psa} and ~\ref{alg:mh}. The time limit $t_lim$ is set to 600 seconds. The initial temperature $T_0$ was set to 1500 \cite{singhernst10}. Due to uncertainty and the overheads required to construct schedules with a number of samples, the parameters within the Metropolis-Hastings algorithm have been set to allow for much quicker cooling schedule. This is achieved by allowing only 1000 iterations $Iter = 1000$ and $\gamma = 0.5$. The population size was found through tuning by hand $s = 10$. The sub-sequence size used within $\beta$-sampling of length 5 jobs, which is obtained from \cite{singhernst10}. Moreover, in Line~14 of Algorithm~\ref{alg:psa}, there are $m = rand() * n$ applications of job swapping, where $rand()$ is a random number between 0 and 1. The parameter settings for ACS and SACS were obtained from ~\cite{thiruvady2022}, to which we refer the reader for complete details. 

We carry out experiments on all problem instances considering all uncertainty levels. There are 25 repetitions of for each algorithm, with each one given 10 minutes of wall clock time. The memory  limit was 2 GB, which proved to be sufficient for all algorithms. The algorithms were implemented in C++ with GCC-5.4.0. All experiments were carried out on  Monash University's Campus Cluster -- MonARCH.\footnote{\url{https://confluence.apps.monash.edu/display/monarch/MonARCH+Home}}

The IP was implemented in Gurobi 9.0.1 \cite{gurobi12}. It was run for 10 minutes per problem instance and allowed a larger amount of memory of 10GB. Across all the runs, only one solution was found for one problem instance at the uncertainly of 0.8. This is not surprising given the size of the model, which is unsuitable in practical settings.

\subsection{Results} \label{sec:results}

The results that follow are reported as a percentage difference from the average solution quality over 25 runs for each algorithm to the best solution found by any algorithm. Consider a problem instance and let $B*$ be the best solution obtained by any algorithm for that problem instance. Let $A_k$ be the best solution found by APSA for instance $k$. Then the percentage difference for APSA is $\frac{A_k - B*}{A_k}$.  

Tables~\ref{tab:comp1} and ~\ref{tab:comp2} show the results of ACS, SACS and APSA for low and high uncertainty levels $UL$, respectively. The tables show the best result obtained by any algorithm (which is highlighted in bold if it is found by APSA) for each uncertainty level, followed by the percentage differences of each algorithm to the corresponding best solution. The best result found by any algorithm is highlighted in bold, and those results that are statistically significant (determined using a Wilcoxon ranked-sum test) are marked with a ``{\bf *}''. To summarise the performance of the algorithms across the problem instances, the last row of the tables report the number of times ({\bf \# best}) the algorithm finds the best solution. 

\begin{table*}[htb!]
\centering
\caption{Results of ACS, SACS and APSA shown as the percentage difference to the best solution for uncertainty levels 0.5, 0.6 and 0.7; Results in  column ``Best'' are highlighted in bold if APSA finds the best solution; Results for each algorithm are highlighted in bold when they achieve best average results, while statistically significant results (pair-wise Wilcoxon ranked-sum test) at a 95$\%$ confidence interval are marked with a {\bf *}.}
\label{tab:comp1}
 \scalebox{0.7}{
\begin{tabular}{llrrrrrrrrrrrrrrrrr}
\toprule			
	&&			\multicolumn{4}{c}{0.5}					&&			\multicolumn{4}{c}{0.6}					&&			\multicolumn{4}{c}{0.7}					\\
 \midrule
 	&&	Best	&	ACS	&	SACS	&	APSA	&&	Best	&	ACS	&	SACS	&	APSA	&&	Best	&	ACS	&	SACS	&	APSA	\\
\midrule					3-5	&&	{\bf	971.7}	&		1.034		&		1.043		&	{\bf *0.142}	&&	{\bf	942.6}	&		0.494		&		0.404		&	{\bf *0.048}	&&	{\bf	988.8}	&		1.177		&		1.092		&	{\bf *0.260}	\\
3-23	&&	{\bf	224.9}	&		0.934		&		0.791		&	{\bf *0.000}	&&	{\bf	232.5}	&		2.314		&		2.052		&	{\bf *0.138}	&&	{\bf	240.5}	&		2.374		&		2.345		&	{\bf *0.162}	\\
3-53	&&	{\bf	221.5}	&		0.474		&		0.366		&	{\bf *0.000}	&&	{\bf	213.4}	&		0.066		&		0.028		&	{\bf	-0.005}	&&	{\bf	231.7}	&		0.319		&		0.211		&	{\bf *0.000}	\\
4-28	&&	{\bf	150.6}	&		6.754		&		6.768		&	{\bf *0.671}	&&	{\bf	128.1}	&		6.409		&		6.300		&	{\bf *1.046}	&&	{\bf	116.1}	&		6.763		&		6.720		&	{\bf *0.164}	\\
4-42	&&	{\bf	401.7}	&		1.327		&		1.279		&	{\bf *0.144}	&&	{\bf	361.1}	&		1.603		&		1.606		&	{\bf *0.219}	&&	{\bf	323.9}	&		0.540		&		0.509		&	{\bf *0.034}	\\
4-61	&&	{\bf	248.3}	&		5.578		&		5.324		&	{\bf *0.209}	&&	{\bf	213.2}	&		4.877		&		4.192		&	{\bf *0.225}	&&	{\bf	196.6}	&		8.246		&		7.519		&	{\bf *1.073}	\\
5-7	&&	{\bf	801.3}	&		3.582		&		3.590		&	{\bf *0.604}	&&	{\bf	673.3}	&		4.496		&		4.147		&	{\bf *0.371}	&&	{\bf	656.6}	&		3.925		&		3.932		&	{\bf *0.801}	\\
5-21	&&	{\bf	683.6}	&		4.640		&		4.353		&	{\bf *1.066}	&&	{\bf	535.8}	&		6.148		&		5.088		&	{\bf *0.649}	&&	{\bf	577.4}	&		5.504		&		4.342		&	{\bf *1.320}	\\
5-62	&&	{\bf	816.7}	&		4.660		&		4.450		&	{\bf *0.579}	&&	{\bf	688.0}	&		5.712		&		5.436		&	{\bf *1.145}	&&	{\bf	676.5}	&		5.117		&		5.175		&	{\bf *1.042}	\\
\midrule
6-10	&&	{\bf	1971.2}	&		4.115		&		3.230		&	{\bf *0.859}	&&	{\bf	2026.3}	&		4.708		&		3.944		&	{\bf *1.610}	&&	{\bf	1583.3}	&		5.018		&		4.215		&	{\bf *1.520}	\\
6-28	&&	{\bf	739.3}	&		5.201		&		5.212		&	{\bf *0.587}	&&	{\bf	762.4}	&		5.105		&		4.963		&	{\bf *1.229}	&&	{\bf	574.8}	&		6.738		&		6.692		&	{\bf *1.541}	\\
6-58	&&	{\bf	763.9}	&		4.778		&		4.590		&	{\bf *1.468}	&&	{\bf	799.0}	&		4.781		&		4.303		&	{\bf *1.055}	&&	{\bf	587.2}	&		6.391		&		6.214		&	{\bf *0.823}	\\
7-5	&&	{\bf	1235.3}	&		2.203		&		2.054		&	{\bf *0.131}	&&	{\bf	1025.5}	&		3.745		&		3.631		&	{\bf *0.725}	&&	{\bf	1213.8}	&		2.963		&		2.833		&	{\bf *0.386}	\\
7-23	&&	{\bf	1426.7}	&		6.619		&		5.917		&	{\bf *1.641}	&&	{\bf	1194.3}	&		6.363		&		6.007		&	{\bf *1.776}	&&	{\bf	1391.4}	&		6.034		&		5.365		&	{\bf *1.286}	\\
7-47	&&	{\bf	1442.1}	&		6.778		&		5.901		&	{\bf *1.540}	&&	{\bf	1191.5}	&		6.496		&		5.815		&	{\bf *1.461}	&&	{\bf	1410.3}	&		6.524		&		5.664		&	{\bf *1.529}	\\
8-3	&&	{\bf	1856.4}	&		10.401		&		7.545		&	{\bf *1.670}	&&	{\bf	1995.5}	&		10.389		&		8.257		&	{\bf *2.845}	&&	{\bf	1640.1}	&		11.968		&		9.434		&	{\bf *3.054}	\\
8-53	&&	{\bf	1154.1}	&		6.363		&		6.101		&	{\bf *1.623}	&&	{\bf	1250.5}	&		5.754		&		5.576		&	{\bf *1.181}	&&	{\bf	1053.8}	&		6.512		&		6.794		&	{\bf *1.839}	\\
8-77	&&	{\bf	2799.0}	&		5.864		&		4.565		&	{\bf *1.612}	&&	{\bf	2971.5}	&		5.134		&		4.193		&	{\bf *1.135}	&&	{\bf	2570.1}	&		5.883		&		5.522		&	{\bf *2.000}	\\
\midrule
9-20	&&	{\bf	2462.6}	&		5.900		&		5.301		&	{\bf *1.648}	&&	{\bf	2086.1}	&		5.375		&		5.114		&	{\bf *1.227}	&&	{\bf	2125.4}	&		5.644		&		5.402		&	{\bf *1.836}	\\
9-47	&&	{\bf	3185.0}	&		6.684		&		5.270		&	{\bf *2.029}	&&	{\bf	2640.4}	&		7.618		&		5.934		&	{\bf *1.993}	&&	{\bf	3333.7}	&		6.529		&		5.134		&	{\bf *2.123}	\\
9-62	&&	{\bf	3485.6}	&		4.678		&		4.103		&	{\bf *1.154}	&&	{\bf	2969.1}	&		4.414		&		3.864		&	{\bf *0.869}	&&	{\bf	3656.7}	&		4.199		&		3.589		&	{\bf *1.060}	\\
10-7	&&	{\bf	5854.2}	&		4.066		&		3.749		&	{\bf *0.875}	&&	{\bf	5603.8}	&		4.764		&		4.108		&	{\bf *1.387}	&&	{\bf	5126.2}	&		4.792		&		4.332		&	{\bf *1.198}	\\
10-13	&&	{\bf	5246.0}	&		3.092		&		2.783		&	{\bf *0.769}	&&	{\bf	4932.8}	&		4.675		&		4.018		&	{\bf *1.607}	&&	{\bf	4517.0}	&		4.719		&		4.497		&	{\bf *1.694}	\\
10-31	&&	{\bf	1603.1}	&		6.196		&		6.914		&	{\bf *1.724}	&&	{\bf	1525.3}	&		6.018		&		6.445		&	{\bf *1.210}	&&	{\bf	1369.6}	&		6.959		&		7.841		&	{\bf *1.608}	\\
11-21	&&	{\bf	2359.0}	&		4.747		&		5.053		&	{\bf *1.167}	&&	{\bf	2402.8}	&		4.731		&		4.919		&	{\bf *0.990}	&&	{\bf	2502.8}	&		5.099		&		4.934		&	{\bf *1.247}	\\
11-56	&&	{\bf	3991.0}	&		4.836		&		4.420		&	{\bf *1.185}	&&	{\bf	4047.3}	&		5.072		&		4.196		&	{\bf *1.177}	&&	{\bf	4156.7}	&		5.714		&		5.204		&	{\bf *2.094}	\\
11-63	&&	{\bf	4354.2}	&		3.885		&		3.518		&	{\bf *0.863}	&&	{\bf	4196.9}	&		5.211		&		4.887		&	{\bf *1.682}	&&	{\bf	4561.3}	&		3.580		&		3.516		&	{\bf *0.636}	\\
\midrule
12-14	&&	{\bf	4405.2}	&		4.136		&		4.295		&	{\bf *1.429}	&&	{\bf	4378.3}	&		4.937		&		4.890		&	{\bf *1.944}	&&	{\bf	3636.3}	&		4.592		&		4.533		&	{\bf *1.352}	\\
12-36	&&	{\bf	6825.5}	&		2.979		&		3.217		&	{\bf *1.484}	&&	{\bf	6844.5}	&		3.762		&		3.376		&	{\bf *1.435}	&&	{\bf	5710.1}	&		3.732		&		3.504		&	{\bf *1.429}	\\
12-80	&&	{\bf	5234.4}	&		6.105		&		5.769		&	{\bf *2.985}	&&	{\bf	5310.1}	&		4.949		&		4.875		&	{\bf *2.004}	&&	{\bf	4460.8}	&		5.080		&		4.653		&	{\bf *2.376}	\\
15-2	&&	{\bf	8105.5}	&		3.879		&		4.319		&	{\bf *3.304}	&&	{\bf	7577.1}	&		3.058		&		3.153		&	{\bf *2.565}	&&	{\bf	7737.2}	&		2.834		&		2.843		&	{\bf *2.015}	\\
15-3	&&	{\bf	9206.1}	&		4.375		&		4.181		&	{\bf *2.511}	&&	{\bf	8585.2}	&		3.669		&		3.626		&	{\bf *2.008}	&&	{\bf	8820.9}	&		2.820		&		2.663		&	{\bf *0.972}	\\
15-5	&&	{\bf	7548.8}	&		5.099		&		5.345		&	{\bf *2.547}	&&	{\bf	7021.0}	&		4.965		&		4.958		&	{\bf *2.430}	&&	{\bf	7186.8}	&		3.964		&		3.968		&	{\bf *1.876}	\\
20-2	&&	{\bf	15042.1}	&		2.994		&		3.320		&	{\bf *1.635}	&&	{\bf	14021.9}	&		2.747		&		2.957		&	{\bf *1.928}	&&	{\bf	15724.6}	&		2.889		&		2.738		&	{\bf *1.793}	\\
20-5	&&	{\bf	25172.0}	&		3.239		&		2.917		&	{\bf *1.693}	&&	{\bf	23175.3}	&		3.832		&		4.185		&	{\bf *2.425}	&&	{\bf	26028.0}	&		3.782		&		4.010		&	{\bf *2.558}	\\
20-6	&&	{\bf	13733.3}	&		2.596		&		2.678		&	{\bf *1.614}	&&	{\bf	12713.5}	&		2.707		&		2.725		&	{\bf *1.900}	&&	{\bf	14329.1}	&		2.554		&		2.630		&	{\bf *1.713}	\\

\midrule
{\bf $\#$ best} && \multicolumn{1}{c}{{\bf 36}} & \multicolumn{1}{c}{{\bf 0}} & \multicolumn{1}{c}{{\bf 0}} & \multicolumn{1}{c}{{\bf 36}} && \multicolumn{1}{c}{{\bf 36}} & \multicolumn{1}{c}{{\bf 0}} & \multicolumn{1}{c}{{\bf 0}} & \multicolumn{1}{c}{{\bf 36}} && \multicolumn{1}{c}{{\bf 36}}  &  \multicolumn{1}{c}{{\bf 0}} & \multicolumn{1}{c}{{\bf 0}} & \multicolumn{1}{c}{{\bf 36}} \\
\bottomrule
\end{tabular}
}
\end{table*}

The results in both tables show that APSA finds best known solutions for all problem instances and uncertainty levels (except instance 15--2 at uncertainty level 0.9), and the best average results across all problem instances for all uncertainty levels. Looking closely, we see that the differences between the algorithms are smallest for the smallest problems (3 machines) and largest problems (20 machines). Everywhere else, APSA has significant advantages. For the problem instances with 3 machines, this is not surprising, because it is expected that all algorithms find solutions that are close to optimal. However, APSA has the advantage that, for every run, it finds a solution close to the best. On investigating the individual runs on the large problem instances, we find that there are relatively few iterations conducted ($<$ 10). This leads to APSA not converging sufficiently well, thereby not finding as large improvements are seen on other smaller problem instances.  

\begin{table*}[htb!]
\centering
\caption{Results of ACS, SACS and APSA shown as the percentage difference to the best solution for uncertainty levels 0.8, 0.9 and 1.0; Results in  column ``Best'' are highlighted in bold if APSA finds the best solution; Results for each algorithm are highlighted in bold when they achieve best average results, while statistically significant results (pair-wise Wilcoxon ranked-sum test) at a 95$\%$ confidence interval are marked with a {\bf *}.} 
\label{tab:comp2}
 \scalebox{0.7}{
\begin{tabular}{llrrrrrrrrrrrrrrrrr}
\toprule			
	&&			\multicolumn{4}{c}{0.8}					&&			\multicolumn{4}{c}{0.9}					&&			\multicolumn{4}{c}{1.0}					\\
 \midrule
  	&&	Best	&	ACS	&	SACS	&	APSA	&&	Best	&	ACS	&	SACS	&	APSA	&&	Best	&	ACS	&	SACS	&	APSA	\\
\midrule					
3-5	&&	{\bf	861.9}	&		0.972		&		0.773		&	{\bf *0.031}	&&	{\bf	949.2}	&		0.165		&		0.162		&	{\bf *0.000}	&&	{\bf	786.3}	&		0.424		&		0.439		&	{\bf *0.050}	\\
3-23	&&	{\bf	214.6}	&		1.981		&		1.631		&	{\bf *0.084}	&&	{\bf	232.2}	&		1.636		&		1.666		&	{\bf *0.000}	&&	{\bf	200.9}	&		2.787		&		2.568		&	{\bf *0.244}	\\
3-53	&&	{\bf	186.0}	&		0.016		&		0.016		&	{\bf *0.000}	&&	{\bf	212.5}	&		0.085		&		0.047		&	{\bf *0.000}	&&	{\bf	145.5}	&		0.069		&		0.041		&	{\bf *0.000}	\\
4-28	&&	{\bf	101.5}	&		7.224		&		6.800		&	{\bf *0.857}	&&	{\bf	113.3}	&		4.556		&		4.830		&	{\bf *0.618}	&&	{\bf	64.5}	&		8.632		&		8.368		&	{\bf *1.364}	\\
4-42	&&	{\bf	279.3}	&		1.891		&		1.457		&	{\bf *0.329}	&&	{\bf	300.6}	&		0.935		&		0.975		&	{\bf *0.116}	&&	{\bf	212.4}	&		2.241		&		1.973		&	{\bf *0.009}	\\
4-61	&&	{\bf	168.7}	&		7.753		&		7.327		&	{\bf *0.854}	&&	{\bf	186.6}	&		6.062		&		5.729		&	{\bf *1.388}	&&	{\bf	114.9}	&		6.922		&		6.983		&	{\bf *1.654}	\\
5-7	&&	{\bf	573.2}	&		4.079		&		3.704		&	{\bf *0.406}	&&	{\bf	632.9}	&		4.775		&		4.399		&	{\bf *0.242}	&&	{\bf	589.9}	&		4.582		&		3.841		&	{\bf *0.071}	\\
5-21	&&	{\bf	445.1}	&		5.687		&		5.028		&	{\bf *1.654}	&&	{\bf	488.2}	&		5.930		&		4.931		&	{\bf *1.540}	&&	{\bf	505.9}	&		5.582		&		4.556		&	{\bf *1.044}	\\
5-62	&&	{\bf	575.7}	&		6.429		&		5.659		&	{\bf *1.553}	&&	{\bf	645.9}	&		6.114		&		5.526		&	{\bf *1.118}	&&	{\bf	592.3}	&		5.185		&		4.788		&	{\bf *1.047}	\\
\midrule
6-10	&&	{\bf	1880.3}	&		4.587		&		3.369		&	{\bf *0.899}	&&	{\bf	1404.6}	&		5.327		&		4.447		&	{\bf *1.101}	&&	{\bf	1674.6}	&		3.967		&		3.259		&	{\bf *0.809}	\\
6-28	&&	{\bf	702.7}	&		6.215		&		5.974		&	{\bf *1.083}	&&	{\bf	502.2}	&		6.283		&		6.263		&	{\bf *1.296}	&&	{\bf	598.6}	&		5.578		&		5.366		&	{\bf *0.984}	\\
6-58	&&	{\bf	736.0}	&		4.356		&		3.806		&	{\bf *1.067}	&&	{\bf	517.5}	&		6.059		&		5.606		&	{\bf *1.015}	&&	{\bf	794.9}	&		4.651		&		3.495		&	{\bf *0.742}	\\
7-5	&&	{\bf	992.0}	&		3.793		&		3.876		&	{\bf *0.812}	&&	{\bf	1277.6}	&		2.216		&		2.062		&	{\bf *0.272}	&&	{\bf	919.6}	&		4.071		&		3.771		&	{\bf *1.075}	\\
7-23	&&	{\bf	1163.8}	&		6.092		&		5.997		&	{\bf *1.662}	&&	{\bf	1466.6}	&		5.974		&		5.281		&	{\bf *1.206}	&&	{\bf	1058.8}	&		7.126		&		6.215		&	{\bf *1.327}	\\
7-47	&&	{\bf	1135.7}	&		7.691		&		6.703		&	{\bf *2.246}	&&	{\bf	1493.3}	&		6.067		&		5.524		&	{\bf *1.051}	&&	{\bf	1031.0}	&		8.798		&		7.950		&	{\bf *2.866}	\\
8-3	&&	{\bf	1594.1}	&		10.686		&		8.116		&	{\bf *2.517}	&&	{\bf	1740.7}	&		9.527		&		7.545		&	{\bf *1.922}	&&	{\bf	1077.0}	&		13.622		&		10.734		&	{\bf *2.893}	\\
8-53	&&	{\bf	1041.5}	&		6.370		&		5.954		&	{\bf *1.625}	&&	{\bf	1107.4}	&		5.501		&		5.821		&	{\bf *1.518}	&&	{\bf	752.1}	&		7.790		&		7.627		&	{\bf *2.153}	\\
8-77	&&	{\bf	2499.8}	&		5.215		&		4.627		&	{\bf *1.302}	&&	{\bf	2669.8}	&		4.973		&		4.097		&	{\bf *1.163}	&&	{\bf	1861.2}	&		6.300		&		4.572		&	{\bf *1.253}	\\
\midrule
9-20	&&	{\bf	2277.7}	&		5.016		&		4.463		&	{\bf *1.343}	&&	{\bf	1964.0}	&		5.258		&		5.149		&	{\bf *1.170}	&&	{\bf	1959.1}	&		6.745		&		6.119		&	{\bf *2.360}	\\
9-47	&&	{\bf	2868.8}	&		6.328		&		4.602		&	{\bf *1.956}	&&	{\bf	2048.5}	&		6.802		&		5.562		&	{\bf *1.512}	&&	{\bf	2312.4}	&		6.872		&		5.587		&	{\bf *1.624}	\\
9-62	&&	{\bf	3189.2}	&		4.428		&		4.091		&	{\bf *1.275}	&&	{\bf	2316.4}	&		5.602		&		4.744		&	{\bf *1.526}	&&	{\bf	2609.0}	&		4.345		&		3.926		&	{\bf *0.947}	\\
10-7	&&	{\bf	4216.6}	&		5.468		&		5.133		&	{\bf *2.104}	&&	{\bf	4675.7}	&		4.641		&		4.449		&	{\bf *1.545}	&&	{\bf	4010.0}	&		5.551		&		5.095		&	{\bf *1.859}	\\
10-13	&&	{\bf	3721.5}	&		4.347		&		4.032		&	{\bf *1.152}	&&	{\bf	4107.9}	&		4.741		&		4.474		&	{\bf *1.548}	&&	{\bf	3522.7}	&		4.906		&		4.490		&	{\bf *1.417}	\\
10-31	&&	{\bf	1100.6}	&		8.086		&		8.739		&	{\bf *1.873}	&&	{\bf	1249.2}	&		5.754		&		6.423		&	{\bf *1.094}	&&	{\bf	1041.7}	&		6.940		&		7.985		&	{\bf *1.382}	\\
11-21	&&	{\bf	1948.8}	&		5.689		&		5.640		&	{\bf *1.829}	&&	{\bf	1940.4}	&		5.596		&		5.714		&	{\bf *1.757}	&&	{\bf	2050.3}	&		5.098		&		4.992		&	{\bf *1.648}	\\
11-56	&&	{\bf	3322.7}	&		5.332		&		4.479		&	{\bf *1.307}	&&	{\bf	3291.5}	&		5.616		&		5.364		&	{\bf *1.741}	&&	{\bf	3452.8}	&		4.922		&		4.790		&	{\bf *1.473}	\\
11-63	&&	{\bf	3504.1}	&		4.215		&		4.112		&	{\bf *1.296}	&&	{\bf	3312.6}	&		4.210		&		4.106		&	{\bf *1.008}	&&	{\bf	3670.2}	&		3.747		&		3.788		&	{\bf *0.900}	\\
\midrule
12-14	&&	{\bf	3625.5}	&		4.608		&		4.575		&	{\bf *1.365}	&&	{\bf	2998.5}	&		4.821		&		5.534		&	{\bf *1.807}	&&	{\bf	2781.7}	&		6.100		&		5.753		&	{\bf *2.273}	\\
12-36	&&	{\bf	5705.1}	&		3.695		&		3.216		&	{\bf *1.549}	&&	{\bf	4798.6}	&		4.066		&		3.667		&	{\bf *1.820}	&&	{\bf	5923.6}	&		3.240		&		3.124		&	{\bf *1.236}	\\
12-80	&&	{\bf	4447.2}	&		5.204		&		4.647		&	{\bf *2.377}	&&	{\bf	3754.3}	&		5.471		&		5.201		&	{\bf *2.299}	&&	{\bf	3534.2}	&		5.557		&		5.138		&	{\bf *2.620}	\\
15-2	&&	{\bf	6311.7}	&		2.411		&		2.192		&	{\bf *1.555}	&&	7051.6	&		3.569		&		3.568		&	{\bf *3.032}	&&	{\bf	5905.5}	&		2.888		&		2.977		&	{\bf *2.104}	\\
15-3	&&	{\bf	7074.2}	&		3.101		&		3.040		&	{\bf *1.102}	&&	{\bf	8037.3}	&		2.679		&		2.814		&	{\bf *1.449}	&&	{\bf	6630.9}	&		3.497		&		3.397		&	{\bf *1.511}	\\
15-5	&&	{\bf	5753.9}	&		4.440		&		4.115		&	{\bf *1.766}	&&	{\bf	6473.3}	&		6.295		&		5.637		&	{\bf *3.051}	&&	{\bf	5431.4}	&		4.227		&		4.047		&	{\bf *1.682}	\\
20-2	&&	{\bf	13838.7}	&		3.476		&		3.375		&	{\bf *2.390}	&&	{\bf	13081.1}	&		2.433		&		2.402		&	{\bf *1.347}	&&	{\bf	12956.3}	&		3.272		&		3.372		&	{\bf *2.110}	\\
20-5	&&	{\bf	22944.5}	&		4.050		&		3.861		&	{\bf *2.593}	&&	{\bf	21584.8}	&		3.658		&		3.912		&	{\bf *2.586}	&&	{\bf	21684.4}	&		2.997		&		2.975		&	{\bf *1.512}	\\
20-6	&&	{\bf	12518.9}	&		3.399		&		3.142		&	{\bf *2.205}	&&	{\bf	11895.5}	&		1.884		&		2.172		&	{\bf *1.169}	&&	{\bf	11880.4}	&		1.851		&		1.780		&	{\bf *1.014}	\\

\midrule
{\bf $\#$ best} && \multicolumn{1}{c}{{\bf 36}} & \multicolumn{1}{c}{{\bf 0}} & \multicolumn{1}{c}{{\bf 0}} & \multicolumn{1}{c}{{\bf 36}} && \multicolumn{1}{c}{{\bf 35}} & \multicolumn{1}{c}{{\bf 0}} & \multicolumn{1}{c}{{\bf 0}} & \multicolumn{1}{c}{{\bf 36}} && \multicolumn{1}{c}{{\bf 36}} &  \multicolumn{1}{c}{{\bf 0}} & \multicolumn{1}{c}{{\bf 0}} & \multicolumn{1}{c}{{\bf 36}} \\

\bottomrule
\end{tabular}
}
\end{table*}

The results in Tables~\ref{tab:comp1} and ~\ref{tab:comp2} are summarised in Figure~\ref{fig:acs-sacs-conv}. The figure shows the performance of each algorithm (ACS, SACS and APSA) by uncertainty levels. The bars represent the average of percentage difference to best across all instances in an uncertainty level. As expected, APSA clearly outperforms the other two methods, with a slight advantage with increasing uncertainty levels.

\begin{figure}[t]
\centering
\includegraphics[clip,width=0.5\textwidth]{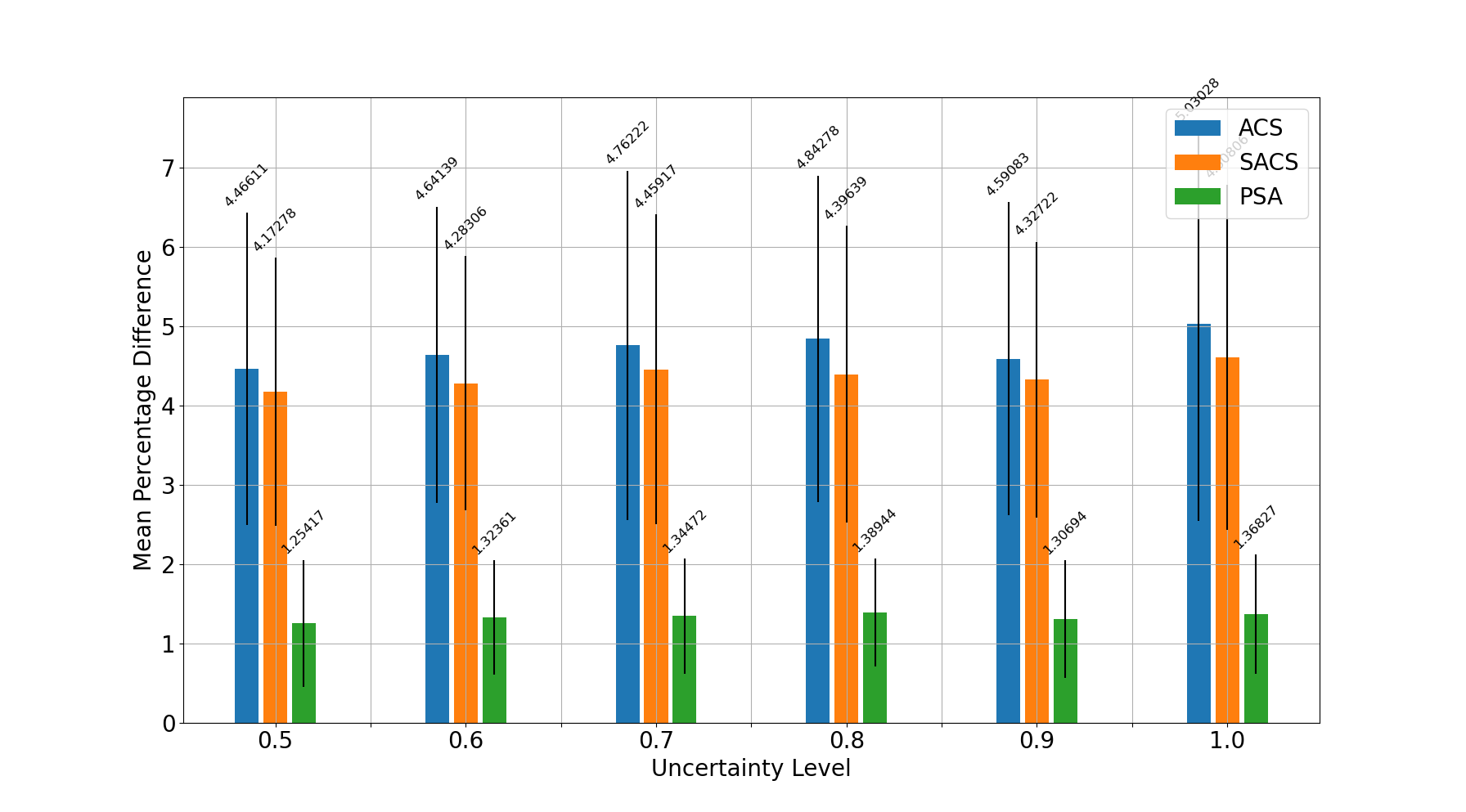}
\caption{Comparison of ACS, SACS and APSA by uncertainty levels.} \label{fig:apsa_vs_acs}
\end{figure}

\subsection{Convergence Characteristics of APSA} \label{sec:conv_char}

APSA outperforms SACS and ACS across all problem instances for all uncertainty levels, due to its ability to diversify well. In particular, APSA's ability to stop getting stuck in local optima leads to substantial gains.

In the following, we examine the convergence of APSA, and compare it to ACS and SACS. We choose four problem instances, which are sufficiently diverse in size to allow making somewhat general inferences about APSA. The results are shown in Figure~\ref{fig:acs-sacs-conv}, where each sub-figure corresponds to one problem instance. The plots show time in seconds on the x-axis and TWT (logarithmic scale) on the y-axis.

\begin{figure*}[htbp]
\centering
\includegraphics[clip,width=17cm]{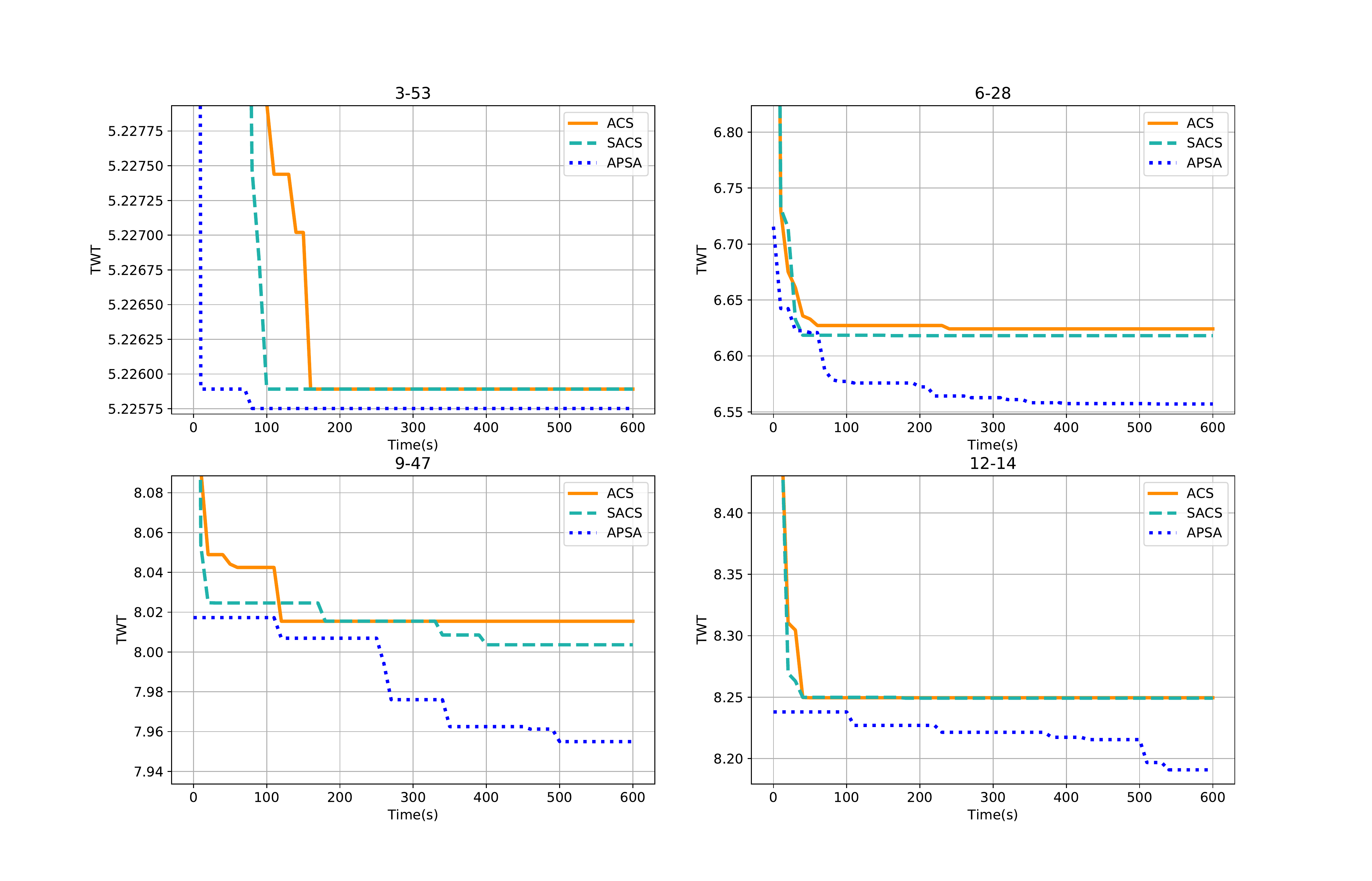}
\caption{Convergence profiles of ACS, SACS and APSA on four RCJSU problem instances of different sizes. The instances chosen are 3--53, 6--28, 9--47 and 12--14. The TWTs on the y-axes are presented on a logarithmic scale.} \label{fig:acs-sacs-conv}
\end{figure*}

Again, APSA is clearly the best performing method across all four problem instances. There are two main advantages of the algorithm, especially related to the population and diversification. Firstly, the starting solution, irrespective of the problem instance, is better than ACS or SACS. This can be attributed to the population, where at the beginning of the search process, there is a substantial amount of diversity in the starting solutions. Secondly, and more importantly, there is continuous improvement in APSA over-time. This can be clearly seen for medium to large problem instances (6--28, 9--47 and 12--14). The increased diversity allowed by the re-initialisation of the solutions in the population ensures that the population does not get stuck in local optima. 

\subsection{Comparisons of APSA to the original SA (\cite{singhernst10})}\label{sec:population}

The original SA (OSA) algorithm  \cite{singhernst10} can be adapted to tackle the RCJSU problem.\footnote{Full details of the algorithm can be found in \cite{singhernst10}, and due to space limitations, we do not provide the full algorithm or its description here.} The main difference is that every time a new solution is created, a new objective needs to be computed, which involves creating multiple schedules corresponding to the samples. Otherwise, all the settings for the algorithm remain the same.

\begin{table}[htb!]
\centering
\caption{A comparison of PSA and the original SA  \cite{singhernst10} (SAO) for the uncertainty levels of 0.6 and 0.9.} 
\label{tab:PSAVsOrig}
 \scalebox{0.8}{
\begin{tabular}{llrrrrrrr}
\toprule			
	&&			\multicolumn{3}{c}{0.6}					&&			\multicolumn{3}{c}{0.9}										\\
\midrule																									
	&&	Best	&	APSA	&	SAO	&&	Best	&	APSA	&	SAO \\
\midrule	
3-5	&&	942.64	&	0.05	&	0.03	&&	949.18	&	0.00	&	0.00	\\
3-23	&&	232.48	&	0.14	&	0.62	&&	232.24	&	0.00	&	0.09	\\
3-53	&&	213.37	&	0.00	&	0.00	&&	212.45	&	0.00	&	0.00	\\
4-28	&&	127.92	&	1.19	&	2.19	&&	113.26	&	0.62	&	1.28	\\
4-42	&&	361.12	&	0.22	&	0.47	&&	300.61	&	0.12	&	0.10	\\
4-61	&&	213.24	&	0.23	&	1.02	&&	186.58	&	1.39	&	2.69	\\
5-7	&&	673.31	&	0.37	&	1.15	&&	632.89	&	0.24	&	0.68	\\
5-21	&&	535.80	&	0.65	&	2.87	&&	488.18	&	1.54	&	3.14	\\
5-62	&&	688.01	&	1.15	&	3.25	&&	645.89	&	1.12	&	3.79	\\
\midrule
6-10	&&	2026.29	&	1.61	&	2.65	&&	1404.63	&	1.10	&	2.96	\\
6-28	&&	762.44	&	1.23	&	2.33	&&	502.18	&	1.30	&	2.53	\\
6-58	&&	798.97	&	1.06	&	2.30	&&	517.45	&	1.01	&	3.37	\\
7-5	&&	1025.49	&	0.72	&	1.82	&&	1277.61	&	0.27	&	0.81	\\
7-23	&&	1194.27	&	1.78	&	4.06	&&	1466.58	&	1.21	&	2.60	\\
7-47	&&	1191.50	&	1.46	&	4.30	&&	1493.28	&	1.05	&	4.49	\\
8-3	&&	1995.45	&	2.84	&	6.84	&&	1740.71	&	1.92	&	5.11	\\
8-53	&&	1250.50	&	1.18	&	3.11	&&	1107.35	&	1.52	&	3.26	\\
8-77	&&	2971.52	&	1.13	&	2.74	&&	2669.8	&	1.16	&	2.52	\\
\midrule
9-20	&&	2086.08	&	1.23	&	3.23	&&	1964.04	&	1.17	&	3.13	\\
9-47	&&	2640.36	&	1.99	&	3.98	&&	2048.54	&	1.51	&	3.95	\\
9-62	&&	2969.09	&	0.87	&	2.42	&&	2316.43	&	1.53	&	3.47	\\
10-7	&&	5602.57	&	1.41	&	2.59	&&	4675.71	&	1.55	&	3.14	\\
10-13	&&	4932.81	&	1.61	&	2.56	&&	4107.91	&	1.55	&	3.15	\\
10-31	&&	1525.30	&	1.21	&	2.42	&&	1249.17	&	1.09	&	3.05	\\
11-21	&&	2402.84	&	0.99	&	2.69	&&	1940.39	&	1.76	&	3.46	\\
11-56	&&	4046.64	&	1.19	&	2.54	&&	3291.52	&	1.74	&	3.09	\\
11-63	&&	4196.87	&	1.68	&	2.69	&&	3312.59	&	1.01	&	1.94	\\
\midrule
12-14	&&	4378.29	&	1.94	&	2.56	&&	2998.54	&	1.81	&	3.21	\\
12-36	&&	6821.22	&	1.78	&	2.23	&&	4798.61	&	1.82	&	2.52	\\
12-80	&&	5310.08	&	2.00	&	2.11	&&	3754.28	&	2.30	&	3.13	\\
15-2	&&	7554.56	&	2.87	&	2.27	&&	7144.34	&	1.69	&	2.04	\\
15-3	&&	8585.22	&	2.01	&	2.20	&&	8037.26	&	1.45	&	1.41	\\
15-5	&&	7021.04	&	2.43	&	2.71	&&	6473.33	&	3.05	&	3.68	\\
20-2	&&	13813.10	&	3.47	&	3.42	&&	13081.1	&	1.35	&	2.21	\\
20-5	&&	23175.30	&	2.42	&	3.53	&&	21584.8	&	2.59	&	2.98	\\
20-6	&&	12713.50	&	1.90	&	2.13	&&	11895.5	&	1.17	&	1.03	\\

\midrule
{\bf $\#$ best} && & \multicolumn{1}{c}{{\bf 32}} & \multicolumn{1}{c}{{\bf 2}} && & \multicolumn{1}{c}{{\bf 31}} & \multicolumn{1}{c}{{\bf 3}} \\

\bottomrule
\end{tabular}
}
\end{table}

Table~\ref{tab:PSAVsOrig} shows the results for APSA and OSA considering the uncertainty levels 0.6 and 0.9. Other than the smallest problem instances (3 machines), APSA clearly is superior, finding better average solutions ($>$ 85$\%$) across both uncertainty levels.  

\subsection{Comparison of APSA with SA}\label{sec:population}

The novel components of APSA including the population of solutions, adaptive probabilistic selection and quicker cooling schedule clearly lead to significant gains over traditional OSA. In this section, we aim to understand the impacts of the population on APSA. This essentially involves an implementation of APSA with a population size of 1. 

\begin{table}[htb!]
\centering
\caption{A comparison of PSA and SA for the uncertainty levels of 0.6 and 0.9.} 
\label{tab:PSAVsSA}
 \scalebox{0.8}{
\begin{tabular}{llrrrrrrr}
\toprule			
	&&			\multicolumn{3}{c}{0.6}					&&			\multicolumn{3}{c}{0.9}										\\
\midrule																									
	&&	Best	&	PSA	&	SA	&&	Best	&	PSA	&	SA	\\
\midrule	

3-5	&&	942.64	&	0.05	&	0.02	&&	949.18	&	0.00	&	0.00	\\
3-23	&&	232.48	&	0.14	&	0.19	&&	232.24	&	0.00	&	0.00	\\
3-53	&&	213.37	&	0.00	&	0.00	&&	212.45	&	0.00	&	0.00	\\
4-28	&&	127.92	&	1.19	&	0.85	&&	113.26	&	0.62	&	0.68	\\
4-42	&&	361.12	&	0.22	&	0.14	&&	300.61	&	0.12	&	0.10	\\
4-61	&&	213.24	&	0.23	&	0.24	&&	186.15	&	1.62	&	1.58	\\
5-7	&&	673.31	&	0.37	&	0.40	&&	632.89	&	0.24	&	0.27	\\
5-21	&&	534.38	&	0.92	&	1.31	&&	487.22	&	1.74	&	1.70	\\
5-62	&&	688.01	&	1.15	&	1.53	&&	645.89	&	1.12	&	1.69	\\
\midrule
6-10	&&	2026.29	&	1.61	&	1.52	&&	1401.94	&	1.29	&	1.35	\\
6-28	&&	762.44	&	1.23	&	1.28	&&	501.61	&	1.41	&	1.24	\\
6-58	&&	798.94	&	1.06	&	1.23	&&	514.92	&	1.51	&	1.44	\\
7-5	&&	1025.15	&	0.76	&	0.93	&&	1277.4	&	0.29	&	0.29	\\
7-23	&&	1194.27	&	1.78	&	2.08	&&	1463.78	&	1.40	&	1.62	\\
7-47	&&	1184.72	&	2.04	&	2.21	&&	1482.06	&	1.82	&	2.62	\\
8-3	&&	1995.45	&	2.84	&	3.88	&&	1740.22	&	1.95	&	2.12	\\
8-53	&&	1245.14	&	1.62	&	1.81	&&	1107.35	&	1.52	&	1.70	\\
8-77	&&	2971.52	&	1.13	&	1.14	&&	2669.8	&	1.16	&	1.48	\\
\midrule
9-20	&&	2081.38	&	1.46	&	1.58	&&	1957.18	&	1.52	&	1.67	\\
9-47	&&	2634.12	&	2.23	&	2.57	&&	2042.29	&	1.82	&	2.92	\\
9-62	&&	2960.89	&	1.15	&	1.37	&&	2316.43	&	1.53	&	1.72	\\
10-7	&&	5578.66	&	1.84	&	1.74	&&	4675.71	&	1.55	&	2.03	\\
10-13	&&	4911.50	&	2.05	&	1.89	&&	4107.3	&	1.56	&	2.06	\\
10-31	&&	1516.59	&	1.79	&	1.35	&&	1249.17	&	1.09	&	1.36	\\
11-21	&&	2370.51	&	2.37	&	2.28	&&	1940.39	&	1.76	&	2.01	\\
11-56	&&	4010.97	&	2.09	&	2.01	&&	3291.52	&	1.74	&	2.09	\\
11-63	&&	4196.87	&	1.68	&	1.48	&&	3300.36	&	1.38	&	1.23	\\
\midrule
12-14	&&	4375.23	&	2.01	&	1.69	&&	2998.54	&	1.81	&	2.01	\\
12-36	&&	6782.42	&	2.36	&	2.27	&&	4777.3	&	2.27	&	2.25	\\
12-80	&&	5224.42	&	3.68	&	3.13	&&	3719.45	&	3.26	&	3.75	\\
15-2	&&	7538.83	&	3.09	&	3.39	&&	7038.4	&	3.23	&	2.90	\\
15-3	&&	8514.53	&	2.85	&	2.48	&&	7929.08	&	2.83	&	2.43	\\
15-5	&&	7020.24	&	2.44	&	2.25	&&	6473.33	&	3.05	&	3.09	\\
20-2	&&	13893.20	&	2.87	&	3.46	&&	13081.1	&	1.35	&	1.64	\\
20-5	&&	23173.90	&	2.43	&	3.50	&&	21584.8	&	2.59	&	2.81	\\
20-6	&&	12617.80	&	2.67	&	2.93	&&	11707.5	&	2.79	&	2.84	\\

\midrule
{\bf $\#$ best} && & \multicolumn{1}{c}{{\bf 21}} & \multicolumn{1}{c}{{\bf 15}} && & \multicolumn{1}{c}{{\bf 27}} & \multicolumn{1}{c}{{\bf 9}} \\

\bottomrule
\end{tabular}
}
\end{table}

The results can be seen in Table~\ref{tab:PSAVsSA}, where comparisons are made for the uncertainty levels 0.6 and 0.9. We see that there is a clear advantage of the population, and the advantage increases with the higher uncertainty levels. 

\subsection{Comparisons on the Deterministic Problem} \label{sec:single_comp}

APSA's advantages obtained from the population, a quicker cooling schedule and adaptive probabilistic selection for the perturbation of solutions, make for an effective algorithm in the uncertain setting. Its performance on the deterministic problem or RCJS is worth investigating, to see if any of the novelties proposed in the algorithm can be beneficial for the deterministic problem. The results are shown in table~\ref{tab:deterministic}, where APSA to ACS, SACS, and a column generation and ACO hybrid (CGACO) are compared. 

\begin{table*}
\centering
\caption{A comparison of CGACO \cite{Thiruvady2014CG}, ACS, SACS and PSA on RCJS. Statistical significance was obtained using a pair-wise Wilcoxon ranked-sum test, with significant results at a  95$\%$ confidence interval highlighted in bold.}
\label{tab:deterministic}
 \scalebox{0.75}{
\begin{tabular}{llrrrrrrrrrrrrrrr}
\toprule			
	&&		\multicolumn{3}{c}{CGACO}		&&	\multicolumn{3}{c}{ACS}	&&	\multicolumn{3}{c}{SACS} &&	\multicolumn{3}{c}{PSA}	\\	
	\cmidrule{3-5}\cmidrule{7-9}\cmidrule{11-13}\cmidrule{15-17}
		Instance	&&	Best	&	Mean	&	SD	&&	Best	&	Mean	&	SD	&&	Best	&		Mean		&	SD	&&	Best	&		Mean		&	SD	\\
  \midrule
Instance	&&	Best	&	Mean	&	SD	&&	Best	&	Mean	&	SD	&&	Best	&		Mean		&	SD	&&	Best	&		Mean		&	SD	\\
3	-	5	&&	515.68	&	520.30	&	6.16	&&	505.00	&	505.00	&	0.00	&&	505.00	&		505.00		&	0.00	&&	505.00	&		505.00		&	0.00	\\
3	-	23	&&	149.07	&	150.04	&	1.09	&&	149.07	&	149.09	&	0.08	&&	149.07	&		149.11		&	0.12	&&	149.07	&	{\bf	149.07}	&	0.00	\\
3	-	53	&&	69.36	&	69.36	&	0.00	&&	69.36	&	69.36	&	0.00	&&	69.36	&		69.36		&	0.00	&&	69.36	&		69.36		&	0.00	\\
4	-	28	&&	23.90	&	23.90	&	0.00	&&	23.94	&	24.04	&	0.09	&&	23.94	&		24.00		&	0.06	&&	23.81	&	{\bf	23.81}	&	0.00	\\
4	-	42	&&	68.59	&	71.26	&	3.01	&&	66.73	&	66.73	&	0.00	&&	66.73	&		66.73		&	0.00	&&	66.07	&	{\bf	66.36}	&	0.33	\\
4	-	61	&&	47.69	&	47.98	&	0.23	&&	45.96	&	45.96	&	0.00	&&	45.96	&		45.96		&	0.00	&&	45.96	&		45.96		&	0.00	\\
5	-	7	&&	260.15	&	275.00	&	17.65	&&	256.76	&	267.40	&	5.43	&&	256.69	&		262.81		&	4.29	&&	252.90	&	{\bf	253.37}	&	0.18	\\
5	-	21	&&	177.27	&	177.27	&	0.00	&&	168.63	&	169.21	&	0.92	&&	168.63	&		168.79		&	0.39	&&	168.63	&	{\bf	168.63}	&	0.00	\\
5	-	62	&&	264.99	&	276.22	&	13.75	&&	259.15	&	266.35	&	5.17	&&	259.12	&		266.37		&	5.43	&&	250.51	&	{\bf	252.63}	&	2.62	\\
\midrule																															
6	-	10	&&	1031.40	&	1031.40	&	0.00	&&	874.90	&	891.58	&	9.87	&&	870.83	&		887.66		&	8.29	&&	831.23	&	{\bf	843.86}	&	6.11	\\
6	-	28	&&	241.28	&	241.28	&	0.00	&&	230.04	&	239.48	&	5.35	&&	225.53	&		236.04		&	5.87	&&	218.37	&	{\bf	222.03}	&	1.91	\\
6	-	58	&&	264.33	&	268.87	&	4.72	&&	248.01	&	255.63	&	3.34	&&	241.04	&		251.53		&	4.52	&&	238.84	&	{\bf	240.65}	&	0.60	\\
7	-	5	&&	488.17	&	502.07	&	20.37	&&	445.18	&	460.07	&	6.51	&&	445.82	&		459.88		&	5.91	&&	418.06	&	{\bf	429.25}	&	7.87	\\
7	-	23	&&	655.52	&	658.40	&	2.55	&&	587.96	&	604.99	&	10.14	&&	577.88	&		595.26		&	8.20	&&	553.91	&	{\bf	570.05}	&	8.24	\\
7	-	47	&&	543.82	&	547.21	&	11.49	&&	472.24	&	502.76	&	12.45	&&	470.42	&		497.73		&	11.80	&&	428.51	&	{\bf	453.82}	&	12.17	\\
8	-	3	&&	911.22	&	1010.67	&	111.72	&&	725.49	&	768.23	&	15.77	&&	721.80	&		746.00		&	14.32	&&	622.92	&	{\bf	669.74}	&	17.23	\\
8	-	53	&&	522.59	&	522.58	&	0.00	&&	494.32	&	507.07	&	6.56	&&	487.27	&		501.04		&	6.56	&&	455.45	&	{\bf	470.12}	&	7.38	\\
8	-	77	&&	1610.28	&	1648.35	&	67.75	&&	1281.98	&	1311.84	&	14.88	&&	1265.41	&		1298.44		&	14.54	&&	1208.54	&	{\bf	1235.36}	&	14.73	\\
\midrule																															
9	-	20	&&	1017.43	&	1023.07	&	5.42	&&	999.95	&	1026.94	&	13.00	&&	991.70	&		1014.82		&	12.15	&&	932.37	&	{\bf	953.10}	&	13.71	\\
9	-	47	&&	1564.69	&	1715.26	&	178.01	&&	1290.84	&	1382.97	&	31.23	&&	1325.49	&		1362.21		&	14.96	&&	1226.35	&	{\bf	1284.73}	&	22.42	\\
9	-	62	&&	1730.74	&	1804.20	&	95.07	&&	1549.98	&	1578.38	&	15.15	&&	1546.42	&		1572.67		&	11.82	&&	1449.69	&	{\bf	1490.92}	&	20.10	\\
10	-	7	&&	2979.79	&	3108.48	&	125.12	&&	2731.54	&	2806.54	&	32.56	&&	2709.76	&		2782.76		&	27.75	&&	2567.60	&	{\bf	2657.63}	&	37.67	\\
10	-	13	&&	2573.10	&	2649.91	&	94.07	&&	2337.86	&	2389.39	&	26.48	&&	2304.22	&		2353.62		&	23.97	&&	2210.89	&	{\bf	2264.51}	&	29.62	\\
10	-	31	&&	674.48	&	674.48	&	0.00	&&	665.47	&	700.52	&	13.95	&&	663.87	&		702.88		&	15.53	&&	620.08	&	{\bf	639.33}	&	8.39	\\
11	-	21	&&	1100.04	&	1130.03	&	24.62	&&	1075.88	&	1132.02	&	22.93	&&	1075.24	&		1119.88		&	18.19	&&	1032.07	&	{\bf	1057.54}	&	15.66	\\
11	-	56	&&	2072.37	&	2216.39	&	192.02	&&	2009.59	&	2077.04	&	26.14	&&	1979.92	&		2044.09		&	29.91	&&	1872.61	&	{\bf	1946.54}	&	35.84	\\
11	-	63	&&	2360.96	&	2362.08	&	0.70	&&	2135.00	&	2193.68	&	28.26	&&	2132.93	&		2182.86		&	23.47	&&	2029.50	&	{\bf	2077.91}	&	23.15	\\
\midrule																															
12	-	14	&&	2257.65	&	2345.85	&	55.65	&&	1922.26	&	1983.88	&	35.70	&&	1851.20	&		1944.55		&	29.38	&&	1771.49	&	{\bf	1879.07}	&	31.24	\\
12	-	36	&&	3781.18	&	3966.80	&	209.41	&&	3192.39	&	3293.29	&	41.24	&&	3179.14	&		3250.00		&	36.68	&&	2992.22	&	{\bf	3110.58}	&	44.50	\\
12	-	80	&&	3116.71	&	3286.07	&	138.28	&&	2625.59	&	2697.86	&	35.31	&&	2557.22	&		2633.16		&	40.96	&&	2408.84	&	{\bf	2520.89}	&	52.05	\\
15	-	2	&&	5207.54	&	5353.60	&	266.67	&&	4220.06	&	4369.19	&	68.77	&&	4166.11	&		4276.08		&	58.83	&&	4041.59	&	{\bf	4209.96}	&	86.57	\\
15	-	3	&&	5538.80	&	6477.89	&	800.76	&&	4834.72	&	4967.70	&	65.56	&&	4694.78	&		4827.35		&	50.00	&&	4481.04	&	{\bf	4629.03}	&	89.13	\\
15	-	5	&&	4560.35	&	4942.42	&	649.77	&&	3794.91	&	3901.78	&	55.80	&&	3729.25	&		3825.77		&	56.47	&&	3496.33	&	{\bf	3612.27}	&	61.85	\\
20	-	2	&&	10959.00	&	10959.00	&	0.00	&&	9301.75	&	9536.87	&	104.95	&&	9076.52	&		9317.65		&	108.35	&&	8832.93	&	{\bf	9214.97}	&	171.43	\\
20	-	5	&&	20813.00	&	20813.00	&	0.00	&&	15595.70	&	16020.42	&	296.60	&&	15226.80	&		15664.50		&	244.66	&&	14590.60	&	{\bf	15304.66}	&	281.96	\\
20	-	6	&&	8789.68	&	9658.98	&	542.10	&&	8176.13	&	8371.80	&	100.00	&&	8057.20	&		8245.80		&	92.95	&&	7905.06	&	{\bf	8212.50}	&	159.31	\\

\bottomrule
\end{tabular}
}
\end{table*}

CGACO is also run 25 times on each problem instance. A single sample was used for ACS, SACS and APSA, leading to the original resource limit (effectively RCJS). We see in Table~\ref{tab:deterministic} that beyond small problem instances ($>$ 4 machines), APSA has a clear advantage. It generally performs best, but without the same large gains seen on RCJSU. 

\section{Conclusion}\label{sec:conclusion}
This study investigates an optimisation problem that originates in the mining industry, specifically transporting ore from mines to ports. Past studies have considered the problem as deterministic, however, in real settings, the resources needed to transport the ore are subject to uncertainty. The problem is also called resource constrained job scheduling with uncertainty, where for this study, we consider multiple uncertain scenarios. To tackle this problem, we propose an adaptive population-based simulated approach, which has significant advantages over previously proposed ACO based approaches. We find that using a population, increasing the rate of the cooling schedule within the Metropolis-Hastings algorithm and adaptively selecting intensification and diversification via the neighbourhood moves, proves advantageous. These modifications, compared to traditional simulated annealing, lead to finding robust solutions, and hence, best results on a benchmark dataset for RCJSU.

Despite the success of APSA on RCJSU, investigating how the approach performs on other optimisation problems with uncertainty will be of interest in future work. In particular, it is expected an adaptation of APSA is likely to be  effective for problems that consist of highly constrained disjoint feasible regions, such as RCJSU.  

The MIP model in Section~\ref{sec:problem} is clearly too complex and large for existing solvers, but decomposition approaches such as column generation or Benders' decomposition may prove effective. The advantage of such methods is that they can provide guarantees of solution quality (via lower bounds to RCJSU), which can help to measure the quality of the solutions found by APSA.

\bibliographystyle{IEEEtran}
\bibliography{References}

\end{document}